\documentclass{article} 
\usepackage{iclr2027_conference,times}

\usepackage{amsmath,amsfonts,bm}

\def\eqref#1{equation~\ref{#1}}
\def\Eqref#1{Equation~\ref{#1}}

\def\1{\bm{1}}

\DeclareMathAlphabet{\mathsfit}{\encodingdefault}{\sfdefault}{m}{sl}
\SetMathAlphabet{\mathsfit}{bold}{\encodingdefault}{\sfdefault}{bx}{n}

\usepackage[utf8]{inputenc} 
\usepackage[T1]{fontenc}    
\usepackage[pagebackref,breaklinks,colorlinks]{hyperref}
\usepackage{url}            
\usepackage{booktabs}       
\usepackage{amsfonts}       
\usepackage{nicefrac}       
\usepackage{microtype}      
\usepackage[table]{xcolor}         
\usepackage{amsmath}
\usepackage{amsthm}
\theoremstyle{definition} 
\newtheorem{theorem}{Theorem}
\usepackage{enumitem}
\usepackage{multirow}
\usepackage{caption}
\usepackage{float}
\usepackage{subcaption}
\usepackage{arydshln}
\usepackage[pdftex]{graphicx}

\title{MLLMs Hallucinate when Information Distribution Drifts in Synergy Heads}

\author{\textbf{Meng'en Qin}$^{1}$, \textbf{Junye Chen}$^{1}$, \textbf{Jucheng Liu}$^{1}$, \textbf{Yinchen Liu}$^{2}$, \textbf{Youlu Xing}$^{1}$,\\ \textbf{Song Wang}$^{1}$, \textbf{Ruize Han}\thanks{Corresponding author.}$^{*,1}$ \\
$^{1}$Shenzhen University of Advanced Technology, Shenzhen, China\\
$^{2}$University of Electronic Science and Technology of China, Chengdu, China \\
\texttt{mengenching@gmail.com}, \texttt{\{wangsong,hanruize\}@suat-sz.edu.cn} \\
}
\iclrfinalcopy 
\begin{document}

\maketitle

\begin{abstract}
Multimodal Large Language Models (MLLMs) often struggle with hallucinations, thus hindering their reliable practical applications. Existing attention-based mitigation methods mainly rely on indirect signals (e.g., attention weights) that fail to accurately reflect the actual information shift underlying hallucination generation. In this paper, we propose HEAL, \textbf{H}ead-l\textbf{E}vel information disent\textbf{A}nglement and ca\textbf{L}ibration for identifying and mitigating hallucinations. HEAL first employs causal noise intervention on multi-head outputs to filter out causally redundant heads. Subsequently, it disentangles information distribution within the remaining heads via the counterfactual Difference-in-Differences, categorizing heads into four types. Through analysis, we observe: \textit{hallucinations happen when information distribution drifts away from a healthy equilibrium in synergy heads}, not strongly correlated with the quantity or strength of modality-specific heads. Motivated by this insight, HEAL injects dynamic information calibration factors into the value vectors of synergy heads and actively regulates visual-language dependencies, steering the output distribution towards factual evidence. Extensive experiments demonstrate that HEAL effectively reduces hallucinations across multiple MLLMs, offering a simple and interpretable pathway to enhance model trustworthiness. 
\end{abstract}

\section{Introduction}
Though Multimodal Large Language Models (MLLMs) have demonstrated remarkable capabilities across a wide range of complex tasks \citep{qwen3vl, llava_1.5, llavanext}, they are frequently plagued by hallucinations \citep{why_hallu, mm_halu_det_survey, mm_hallu_mitigation_survey}, which means generating plausible-sounding responses that are factually inconsistent with the visual context. This phenomenon remains a significant bottleneck for the reliable applications of MLLMs in high-precision fields like medical imaging.

To mitigate hallucinations, extensive efforts have been made from both macro- and micro-level perspectives \citep{mm_hallu_mitigation_survey}. Macro-level strategies, such as retraining or fine-tuning \citep{reflective_fine_tuning, hallucidoctor, perturbollava}, architectural scaling \citep{vcoder, internvl}, and reinforcement learning \citep{hallu_dpo}, typically treat the model as a black box and attempt to correct hallucinations from the outside. In contrast, micro-level methods aim to alleviate hallucinations through decoding optimization \citep{leng2024vcd, convis_decoding, zhu2024ibd} or attention-based enhancement \citep{vhr, dual_attention, inference_intervention}. However, decoding optimization intervenes only at the output level, offering limited interpretability and insight into the internal mechanisms. Likewise, most attention-based approaches depend on indirect signals, such as attention weights or FFN activations, which may be unable to capture the actual causal information distribution and confounding synergy effects involved in hallucination generation. Moreover, uni-modal attention enhancement often leads to a “see-saw” dilemma \citep{dual_attention}: strengthening visual attention may cause truncated or less creative responses, whereas over-reliance on language can exacerbate visual hallucinations.
\begin{figure}[t]
    \centering
    \includegraphics[width=0.90\linewidth]{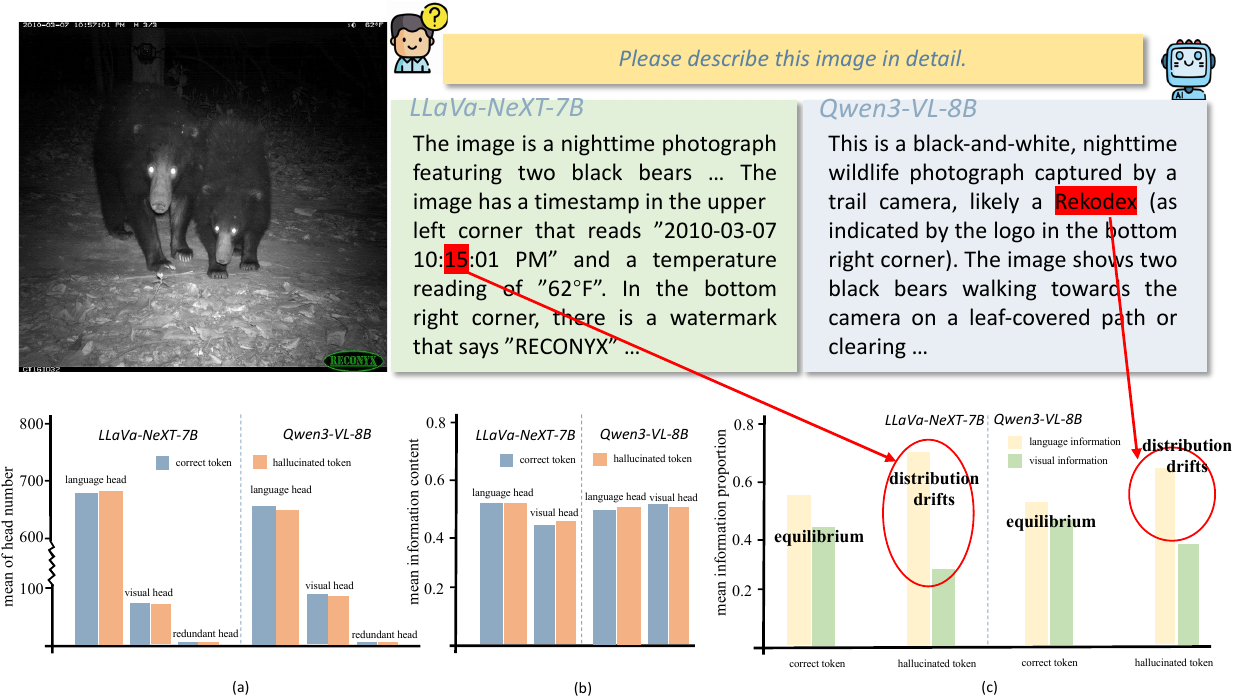}
    \caption{(a) Mean number of heads of different types when generating hallucinated versus correct tokens; (b) Mean information content of visual and language heads in hallucinated and correct token generation; (c) Mean information proportion within synergy heads for hallucinated and correct tokens.}
    \label{fig:hallu_synergy}
\end{figure}

Motivated by these limitations, we propose HEAL, head-level information disentanglement and calibration, to identify and mitigate hallucinations during autoregressive generation. It disentangles the information distribution within attention heads, categorizing them into redundant, visual, language and synergy heads. By leveraging causal noise intervention and counterfactual Difference-in-Differences, HEAL provides a fine-grained, head-level interpretable lens into the internal mechanics of MLLMs. As discussed in Figure \ref{fig:hallu_synergy} and Section \ref{bidirectional_causal}, through extensive analysis on models like Qwen3-VL \citep{qwen3vl} and LLaVA-NeXT \citep{llavanext}, we observe critical insights into the pathology of hallucinations:

\textbf{MLLMs hallucinate when information distribution drifts away from a healthy equilibrium in synergy heads, not strongly correlated with the quantity or strength of modality-specific heads.}

Additionally, from Figure \ref{fig:hallu_dist}(a) and (b), we further see that head roles are highly dynamic throughout the generation. As the model generates different types of tokens, attention heads undergo a task-driven phase transition. When producing language-centric tokens, synergy heads tend to revert toward language heads; when generating visually grounded tokens, some language heads shift toward synergy heads, and a small subset of synergy heads may further move toward visual heads. This dynamic behavior highlights the adaptive nature of Transformer attention \citep{attention}. However, when focusing specifically on visually grounded token generation, we find that hallucinations often begin with an internal drift of information distribution within synergy heads.

Based on these insights, we propose a dynamic information calibration strategy during inference. We introduce an equilibrium factor ($\alpha$), akin to the temperature hyperparameter, to regulate the model’s dependence on visual versus language information. By dynamically monitoring and calibrating the information distribution within synergy heads, HEAL effectively steers the output back toward factual evidence without sacrificing too much linguistic coherence. Extensive experiments across multiple MLLM benchmarks demonstrate that HEAL consistently reduces hallucinations and provides a simple, interpretable pathway toward more trustworthy multimodal systems.

Our contributions are summarized as follows:
\begin{itemize}[leftmargin=*]
    \item We propose HEAL, utilizing causal noise intervention and counterfactual Difference-in-Differences to categorize attention heads into four functional types, providing a new interpretable perspective on MLLM internals.
    \item We identify visual-language information disequilibrium within synergy heads as an intervenable factor that can influence hallucination behavior, rather than attributing hallucinations solely to macro-level modality heads, and characterize the ''task-driven role transition'' of attention heads.
    \item We introduce a theoretically grounded dynamic calibration strategy that uses the equilibrium factor $\alpha$ to regulate the visual-language information distribution, predictably steering the representation geometry toward the equilibrium direction and improving the factuality of MLLM generation.
\end{itemize}
\begin{figure}[ht]
    \centering
    \includegraphics[width=0.9\linewidth]{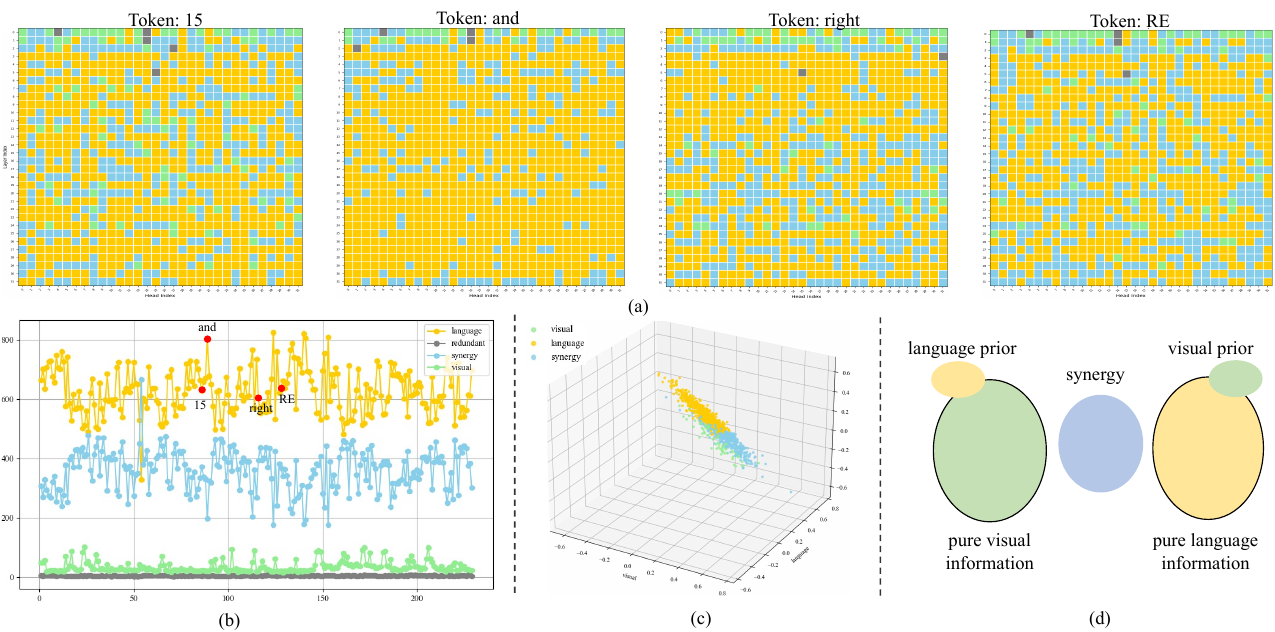}
    \caption{(a) Head distribution in the LLaVA-NeXT-7B model when generating different tokens, where green, blue, yellow and gray denote visual, synergy, language and redundant heads, respectively; (b) The evolution of the number of heads in the autoregressive generation. (c) 3D scatter plot of attention heads of different types, where the axes correspond to visual, language and synergy contributions. (d) Illustration of the internal information structure within a single attention head. The visualization in (a) and (b) is based on the same VQA example as in Figure \ref{fig:hallu_synergy}. \textit{From the figure, we observe that the number of visual heads remains nearly constant throughout the generation.}}
    \label{fig:hallu_dist}
\end{figure}
\section{Related Works}
\textbf{Hallucination Mitigation in MLLMs.}
Recent studies mitigate hallucinations in MLLMs mainly from four perspectives \citep{mm_hallu_mitigation_survey}. Data-centric methods reduce spurious correlations by constructing negative, counterfactual, or cleaner supervision data~\citep{lrv2023,lure2023}. Training- and model-level methods strengthen visual grounding through improved alignment objectives, auxiliary supervision, preference learning, and stronger multimodal architectures~\citep{sun2024farlhf, hallu_dpo, reflective_fine_tuning,hallucidoctor, perturbollava}. Inference-time methods intervene in decoding by contrastive or guided strategies to suppress language priors and encourage visual evidence usage~\citep{leng2024vcd,huang2024opera,convis_decoding}. In addition, detection-based and post-hoc correction methods serve as a practical complement by locating hallucinated content and revising unreliable outputs~\citep{lure2023}.

\textbf{Attention-based Mitigation Methods.}
Most attention-based methods mitigate hallucinations by explicitly increasing visual attention during decoding. The main intuition is that MLLMs tend to rely less on image prompts as generation proceeds, causing language priors to dominate and leading to visually ungrounded outputs \citep{mm_hallu_mitigation_survey}. Representative approaches differ in how they select and strengthen visual attention. AGLA \citep{agla} improves prompt-relevant local attention by combining global and local attention to emphasize regions most relevant to the query. PAI \citep{pai2024}, EAH \citep{eah2024}, and VHR \citep{vhr} further intervene on attention heads to make decoding more image-centric and alleviate visual attention sinks \citep{see_what, attention_sink}. Owl \citep{dual_attention} introduces a dual-path contrastive decoding strategy in which one path reinforces visually grounded attention while the other amplifies hallucinated ones. CausalMM \citep{attention_causality} uses structural causal modeling to treat modality priors as a confounder between attention and output in MLLMs. FarSight \citep{far_see} proposes a versatile plug-and-play decoding strategy that reduces attention interference from outlier tokens merely by optimizing the causal mask.
As discussed above, these approaches tend to rely on attention weights as an indirect proxy for token contribution and cannot reflect the actual information structure in attention heads, while uni-modal attention enhancement inherently struggles to balance vision and language.

\section{Method}
In this section, we present the proposed HEAL and introduce how it (1) identifies and categorizes attention heads, and (2) dynamically calibrates information drift within synergy heads.
\subsection{Causal Noise Intervention on Multi-head Outputs}
\label{causal_intervention}
In MLLMs, the image is first encoded by a visual encoder into visual embeddings, which are then projected into the language space via a projector. The resulting visual tokens are concatenated with text tokens and fed into the language backbone for autoregressive generation. Generally, the language backbone consists of multiple transformer decoder layers and each layer $l$ performs a multi-head attention operation among tokens. Under the KV-cache setting, the attention head $i$ at generation step $t$ is formulated as:
\begin{equation}
\label{equ:attention}
\mathbf{o}^{(t,l,i)} = \mathrm{Softmax}\left(
\frac{\mathbf{q}^{(t,l,i)} (\mathbf{K}^{(t,l,i)})^\top}{\sqrt{d}}
\right)
\begin{bmatrix} \mathbf{V}^{(t,l,i)}_{vis} \\ \mathbf{V}^{(t,l,i)}_{lang} \end{bmatrix},
\end{equation}
where $\mathbf{q}^{(t,l,i)}$ is the query at the current step, $d$ is the dimension of the query, and $\mathbf{K}^{(t,l,i)}$, $\mathbf{V}^{(t,l,i)}_{vis}$, $\mathbf{V}^{(t,l,i)}_{lang}$ denote the cached key, visual value, language value matrices. 
The outputs of all heads are concatenated and linearly projected:
\begin{equation}
\label{equ:multi_head_output}
\mathbf{y}^{(t,l)} = \mathbf{x}^{(t,l)} + 
\mathbf{W}_O^{(l)} \left[
\mathbf{o}^{(t,l,1)}; \cdots; \mathbf{o}^{(t,l,H)}
\right],
\end{equation}
where $\mathbf{x}^{(t,l)}$ is the residual input and $\mathbf{W}_O^{(l)}$ is the output projection matrix.

Our goal is to identify insignificant attention heads that contribute less to the outputs. Instead of relying on the corresponding projection weights in $\mathbf{W}_O^{(l)}$, we directly intervene on the head outputs and measure the resulting differences in the layer representation. This is necessary because a head with a large activation may still contribute substantially even if its corresponding projection weights are small, while a head with large weights but near-zero activations may have a limited effect. In addition, a head's contribution cannot be faithfully captured by its own weights alone, since the final output may depend on the cooperative interaction among multiple heads.

To estimate the actual effect of the $i$-th head in layer $l$, we replace its output with distribution-matched Gaussian noise in \Eqref{equ:multi_head_output}:
\begin{equation}
\tilde{\mathbf{y}}^{(t,l)}_{(-i)}
=
\mathbf{x}^{(t,l)}
+
\mathbf{W}_O^{(l)}
\left[
\mathbf{o}^{(t,l,1)};
\cdots;
\tilde{\mathbf{o}}^{(t,l,i)};
\cdots;
\mathbf{o}^{(t,l,H)}
\right],
\end{equation}
\begin{equation}
\tilde{\mathbf{o}}^{(t,l,i)} \sim \mathcal{N}\!\left(\boldsymbol{\mu}_{t,l,i}, \boldsymbol{\Sigma}_{t,l,i}\right),
\end{equation}
where $\boldsymbol{\mu}_{t,l,i}$ and $\boldsymbol{\Sigma}_{t,l,i}$ are the corresponding mean and covariance in $\mathbf{o}^{(t,l,i)}$. Before quantifying the significance of the head, we need define the vector similarity metric as follows:
\begin{equation}
\mathrm{Sim}(\mathbf{a}, \mathbf{b})
=
\frac{1}{2}\left(1+\cos(\mathbf{a},\mathbf{b})\right)
=
\frac{1}{2}\left(1+\frac{\mathbf{a}^{\top}\mathbf{b}}{\|\mathbf{a}\|_2\,\|\mathbf{b}\|_2}\right).
\end{equation}
Thus, the information contribution of the head is 
\begin{equation}
I^{(t,l,i)} = 1 - \mathrm{Sim}\left(\mathbf{y}^{(t,l)}, \tilde{\mathbf{y}}^{(t,l)}_{(-i)}\right).
\end{equation}
After calculating the scores of all heads in layer $l$, we classify a head as \textbf{causally redundant} if 
$I(t,l,i) < \mu(I^{(t,l,:)}) - 3\sigma(I^{(t,l,:)})$,
where $\mu$ and $\sigma$ denote the mean and standard deviation operators. A head may exhibit rich internal information patterns while having negligible influence on the final output; such heads are not informative for downstream intervention. Therefore, before performing subsequent finer-grained decomposition, we first exclude causally redundant heads.
\subsection{Counterfactual Difference-in-Differences in Attention Head}
\label{did}
\begin{figure}[t]
    \centering
    \includegraphics[width=0.9\linewidth]{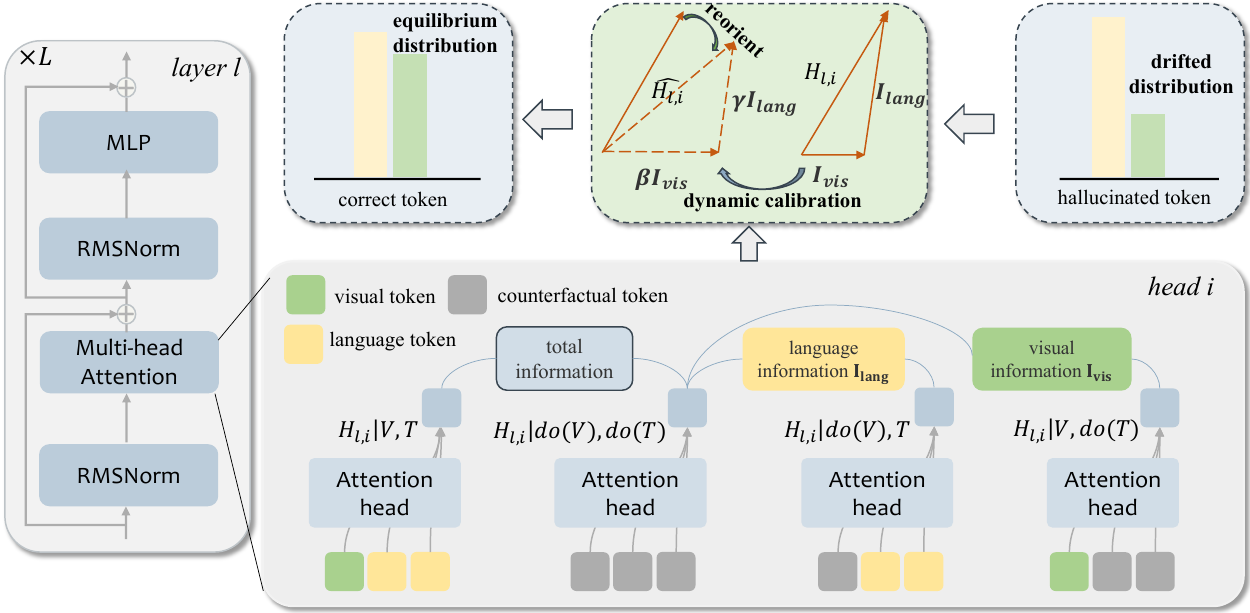}
    \caption{Illustration of the proposed HEAL. It employs the counterfactual difference-in-differences mechanism to disentangle the information distribution within attention heads and dynamically calibrates the information drift in synergy heads.}
    \label{fig:heal}
\end{figure}
Inspired by Partial Information Decomposition Theory \citep{pid_first}, for a multimodal head, its internal information can be viewed as a combination of pure visual information, pure language information, prior information and multimodal synergy, as illustrated in Figure \ref{fig:hallu_dist}(d). To practically and easily approximate these components, we construct four counterfactual inputs by independently masking visual and language tokens with Gaussian noise preserving the first- and second-order statistics:
\begin{equation}
\mathbf{H}_{11} = H_{t,l,i}(\mathbf{V}, \mathbf{T}), \quad
\mathbf{H}_{01} = H_{t,l,i}(\bar{\mathbf{V}}, \mathbf{T}), \quad
\mathbf{H}_{10} = H_{t,l,i}(\mathbf{V}, \bar{\mathbf{T}}), \quad
\mathbf{H}_{00} = H_{t,l,i}(\bar{\mathbf{V}}, \bar{\mathbf{T}}),
\end{equation}
where $\bar{\mathbf{V}}$ and $\bar{\mathbf{T}}$ denote masked visual and language tokens, respectively. We define the total information content of this head as the difference between the full and fully counterfactual settings:
\begin{equation}
\label{equ:total}
I_{\mathrm{total}}
=
\underbrace{1 - \mathrm{Sim}(\mathbf{H}_{11}, \mathbf{H}_{00})}_{\text{first-order difference}},
\end{equation}
Based on the two single-modal counterfactual cases, the visual and language information content can be calculated as
\begin{equation}
\label{equ:I_lang}
I_{\mathrm{lang}}
=
\underbrace{
(1 - \mathrm{Sim}(\mathbf{H}_{11}, \mathbf{H}_{00}))-
(1 - \mathrm{Sim}(\mathbf{H}_{11}, \mathbf{H}_{01}))}_{\text{second-order difference}}
=
\underbrace{
\mathrm{Sim}(\mathbf{H}_{11}, \mathbf{H}_{01})-\mathrm{Sim}(\mathbf{H}_{11}, \mathbf{H}_{00})}_{\text{pure language information$+$language prior}},
\end{equation}
\begin{equation}
\label{equ:I_vis}
I_{\mathrm{vis}}
=
\underbrace{
(1 - \mathrm{Sim}(\mathbf{H}_{11}, \mathbf{H}_{00}))-
(1 - \mathrm{Sim}(\mathbf{H}_{11}, \mathbf{H}_{10}))}_{\text{second-order difference}}
=
\underbrace{
\mathrm{Sim}(\mathbf{H}_{11}, \mathbf{H}_{10})-\mathrm{Sim}(\mathbf{H}_{11}, \mathbf{H}_{00})}_{\text{pure visual information$+$visual prior}}.
\end{equation}
Then the synergy effect is approximated by
\begin{equation}
\label{equ:syn}
I_{\mathrm{syn}}
=
I_{\mathrm{total}} - I_{\mathrm{vis}} - I_{\mathrm{lang}}
=
\underbrace{
1 +
\mathrm{Sim}(\mathbf{H}_{11}, \mathbf{H}_{00})
-\mathrm{Sim}(\mathbf{H}_{11}, \mathbf{H}_{01})
-\mathrm{Sim}(\mathbf{H}_{11}, \mathbf{H}_{10})}_{\text{synergy effect$-$evidence supported prior}}
\end{equation}
Note that $I_{\mathrm{syn}}$ is not the true synergy measure and may be positive or negative, since it reflects the synergy effect after discounting the overlap between prior-induced and evidence-supported information. However, our subsequent categorization primarily rely on the visual and language information content.
Based on these scores, we classify attention heads into different types. First, heads with negligible total information content are defined as \textbf{information redundant heads}:
\begin{equation}
\label{equ:info_redundant_criteria}
I_{\mathrm{total}} < \mu_{\mathrm{total}} - 3\sigma_{\mathrm{total}},
\end{equation}
where $\mu_{\mathrm{total}}$ and $\sigma_{\mathrm{total}}$ denote the mean and standard deviation of the total information scores across all heads.
For the non-redundant heads, we further distinguish modality-specific heads. If
$I_{\mathrm{vis}} > 0$ and $I_{\mathrm{lang}} \leq 0$,
the head is regarded as a visual head. Conversely, if
$I_{\mathrm{vis}} \leq 0$ and $I_{\mathrm{lang}} > 0$,
the head is regarded as a language head. When both $I_{\mathrm{vis}}$ and $I_{\mathrm{lang}}$ are positive, we compute the modality ratio: 
\begin{equation}
\alpha_{\mathrm{vis}} = \frac{I_{\mathrm{vis}}}{I_{\mathrm{vis}} + I_{\mathrm{lang}}},
\qquad
\alpha_{\mathrm{lang}} = \frac{I_{\mathrm{lang}}}{I_{\mathrm{vis}} + I_{\mathrm{lang}}}.
\end{equation}

Since the distribution of modality ratios is typically skewed\citep{vhr,sparsemm}, we first apply a Logit transformation to reduce skewness and then employ the robust median absolute deviation (MAD) to determine the thresholds:
\begin{equation}
\mathrm{MAD}(x) = \mathrm{median}\bigl(|x - \mathrm{median}(x)|\bigr).
\end{equation}
A head is classified as a visual head if
$\mathrm{Logit}(\alpha_{\mathrm{vis}}) > \mathrm{median}(\mathrm{Logit}(\alpha_{\mathrm{vis}})) + \lambda \cdot \mathrm{MAD}(\mathrm{Logit}(\alpha_{\mathrm{vis}}))$, 
and analogously for a language head. The remaining heads are categorized as synergy heads. Synergy heads can be further divided into visual-preferred and language-preferred ones according to whether $\alpha_{\mathrm{vis}} > \alpha_{\mathrm{lang}}$ or $\alpha_{\mathrm{vis}} < \alpha_{\mathrm{lang}}$. Figure \ref{fig:hallu_dist}(c) visualizes the resulting head taxonomy in a 3D space, where the three axes correspond to visual, language, and synergistic information, respectively.

\subsection{Dynamic Information Calibration}
Figure \ref{fig:heal} shows the procedure of the proposed dynamic information calibration strategy. From the above analysis, hallucinated tokens are often triggered by a significant information drift in synergy heads. Therefore, we introduce an equilibrium factor $\alpha \in (0,1)$ to characterize the desired visual-language equilibrium when generating correct tokens. Intuitively, $\alpha$ controls the model's reliance on visual information, while $1-\alpha$ controls its reliance on language information.
To reorient the information distribution in the head toward the target equilibrium $\alpha$, we need two calibration factors so that 
\begin{equation}
\frac{\beta I_{\mathrm{vis}}}{\beta I_{\mathrm{vis}} + \gamma I_{\mathrm{lang}}} = \alpha.
\end{equation}
The typical choice is 
\begin{equation}
\beta = \frac{\alpha}{\alpha_{vis}}, \qquad
\gamma = \frac{1-\alpha}{\alpha_{lang}}.
\end{equation}
After estimating the information distribution at each generation step, we dynamically calibrate the value vectors corresponding to visual and language tokens by $\beta$ and $\gamma$, respectively. Importantly, this operation is applied \emph{after} the KV cache update and \emph{before} the attention kernel, so that the internal implementation of FlashAttention \citep{flashattention2} or PagedAttention \citep{paged_attention} is not affected. The calibrated attention head can be reformulated as:
\begin{equation}
\mathbf{o}^{(t,l,i)} = \mathrm{Softmax}\left(
\frac{\mathbf{q}^{(t,l,i)} (\mathbf{K}^{(t,l,i)})^\top}{\sqrt{d}}
\right)
\begin{bmatrix} \beta \mathbf{V}^{(t,l,i)}_{vis} \\ \gamma \mathbf{V}^{(t,l,i)}_{lang} \end{bmatrix}.
\end{equation}
\begin{theorem}[Equivalence between value space and information distribution calibration]
\label{theorem_1}
Under a local additive approximation, in an attention head, calibrating the value space is equivalent to calibrating its modality-wise information distribution toward a target proportion.
\end{theorem}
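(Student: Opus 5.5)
We make the phrase ``local additive approximation'' precise first and then verify the equivalence by direct computation. Write the head output from the attention equation as $\mathbf{o} = \mathbf{a}_{vis}^\top \mathbf{V}_{vis} + \mathbf{a}_{lang}^\top \mathbf{V}_{lang} =: \mathbf{o}_{vis} + \mathbf{o}_{lang}$, where $\mathbf{a}$ is the softmax row split by token blocks. Value scaling does not touch the attention weights, since $\beta,\gamma$ act after the softmax on $\mathbf{V}$ only. Hence the calibrated output is exactly $\mathbf{o}' = \beta\,\mathbf{o}_{vis} + \gamma\,\mathbf{o}_{lang}$, with no approximation needed at this stage.

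The local additive approximation we adopt is the following. In a neighbourhood of the operating point, the counterfactual differences entering $I_{\mathrm{vis}}$ and $I_{\mathrm{lang}}$ are linear, modality-separable functionals of the respective branch contributions. That is, there exist a linear map $\phi$, arising from a first-order expansion of $1-\mathrm{Sim}(\cdot,\cdot)$ around $\mathbf{H}_{00}$, such that
\begin{equation*}
I_{\mathrm{vis}} \approx \phi(\mathbf{o}_{vis}-\bar{\mathbf{o}}_{vis}), \qquad I_{\mathrm{lang}} \approx \phi(\mathbf{o}_{lang}-\bar{\mathbf{o}}_{lang}),
\end{equation*}
with cross terms absorbed into $I_{\mathrm{syn}}$. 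The key modelling point is that replacing tokens by statistics-matched noise $\bar{\mathbf{V}}$ yields a baseline branch output $\bar{\mathbf{o}}$ whose informative deviation scales with the value vectors. Scaling $\mathbf{V}_{vis}$ by $\beta$ therefore scales the deviation by $\beta$, because the matched noise of $\beta\mathbf{V}$ has mean $\beta\boldsymbol{\mu}$ and covariance $\beta^2\boldsymbol{\Sigma}$. Linearity of $\phi$ then gives $I'_{\mathrm{vis}}=\beta I_{\mathrm{vis}}$ and $I'_{\mathrm{lang}}=\gamma I_{\mathrm{lang}}$ to first order.

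With this in hand the equivalence is algebra. Substituting $\beta=\alpha/\alpha_{vis}$ and $\gamma=(1-\alpha)/\alpha_{lang}$ gives $\beta I_{\mathrm{vis}} = \alpha (I_{\mathrm{vis}}+I_{\mathrm{lang}})$ and $\gamma I_{\mathrm{lang}} = (1-\alpha)(I_{\mathrm{vis}}+I_{\mathrm{lang}})$. The calibrated modality ratio is therefore $\alpha'_{vis}=\alpha$, and the calibrated total $I'_{\mathrm{vis}}+I'_{\mathrm{lang}}$ equals the original total, so only the distribution is changed. For the converse direction, any positive pair $(\beta,\gamma)$ achieving the ratio $\alpha$ must satisfy $\beta/\gamma = \alpha\,\alpha_{lang}/((1-\alpha)\alpha_{vis})$. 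Consequently the value-space calibrations realizing a target proportion form exactly a one-parameter family, differing only by a common scale. Adding the normalization that the total visual--language information be preserved selects the stated choice uniquely. This establishes the claimed two-way correspondence between value-space scalings and information-distribution calibrations.

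The main obstacle is justifying the linearization of the cosine-based $\mathrm{Sim}$. Cosine similarity is scale-invariant and nonlinear, so $I_{\mathrm{vis}}$ is not literally homogeneous in $\beta$. I would handle this by a first-order Taylor expansion of $\mathrm{Sim}(\mathbf{H}_{11},\cdot)$ about $\mathbf{H}_{11}$ for small perturbations. Here $1-\cos$ is quadratic, so I would instead use the deviation norm projected orthogonally to $\mathbf{H}_{11}$. I would then state explicitly that the approximation error is of higher order in the branch deviations, and that the residual cross-modal terms are exactly the synergy term $I_{\mathrm{syn}}$, which the theorem leaves uncalibrated. If the quadratic behaviour makes $I$ scale like $\beta^2$, the same argument goes through with $\sqrt{\cdot}$-adjusted factors. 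I would remark on this case but keep the stated form under the additive (first-order) assumption.
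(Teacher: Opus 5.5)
\textbf{Verdict: same skeleton as the paper, with a gap in how you justify the key step.} The skeleton matches: value scaling leaves the softmax untouched, so the branches become exactly $\beta\mathbf{h}_{\mathrm{v}}$ and $\gamma\mathbf{h}_{\mathrm{l}}$. Once the information functional is positively homogeneous of degree one, the ratio algebra returns $\alpha$. Your total-preservation and uniqueness-up-to-a-common-scale remarks are correct consequences of that hypothesis; the paper does not state them. The gap is in your second paragraph, where you try to \emph{derive} degree-one homogeneity for the actual cosine-based DiD score.

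\textbf{Why the derivation fails.} Your $\phi$ is the gradient of $\mathrm{Sim}(\mathbf{H}_{11},\cdot)$ at $\mathbf{H}_{00}$. Both points are rebuilt from the rescaled branches: $\mathbf{H}'_{11}=\beta\mathbf{o}_{vis}+\gamma\mathbf{o}_{lang}$ and $\mathbf{H}'_{00}=\beta\bar{\mathbf{o}}_{vis}+\gamma\bar{\mathbf{o}}_{lang}$. So calibration changes $\phi$ itself, and ``linearity of $\phi$'' does not give $I'_{\mathrm{vis}}=\beta I_{\mathrm{vis}}$.
\begin{itemize}
\item A direct counterexample: take $\beta=\gamma=c\neq 1$. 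Under your own matched-noise coupling every $\mathbf{H}_{jk}$ is multiplied by $c$, all cosines are unchanged, and $I'_{\mathrm{vis}}=I_{\mathrm{vis}}\neq c\,I_{\mathrm{vis}}$.
\item The linearization is also of the wrong order. Write $\delta_{\mathrm{v}}=\mathbf{o}_{vis}-\bar{\mathbf{o}}_{vis}$ and $\delta_{\mathrm{l}}=\mathbf{o}_{lang}-\bar{\mathbf{o}}_{lang}$. Your additive model gives $\mathbf{H}_{11}-\mathbf{H}_{00}=\delta_{\mathrm{v}}+\delta_{\mathrm{l}}$, so that gradient is itself first order in the deviations.
\item Locally, therefore, the DiD scores are second order in the branch deviations: a quadratic term plus a visual--language cross term, not a linear one.
\item Under that scaling, $\beta=\alpha/\alpha_{\mathrm{vis}}$ and $\gamma=(1-\alpha)/\alpha_{\mathrm{lang}}$ do not return ratio $\alpha$ in general. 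Freezing $\mathbf{H}_{11}$, dropping the cross term, and writing $a=\|P^{\perp}_{\mathbf{H}_{11}}\delta_{\mathrm{v}}\|^2$, $b=\|P^{\perp}_{\mathbf{H}_{11}}\delta_{\mathrm{l}}\|^2$, you get $\alpha^2 b/(\alpha^2 b+(1-\alpha)^2 a)$. This equals $\alpha$ only when $\alpha=\alpha_{\mathrm{vis}}$.
\item Your $\sqrt{\cdot}$ fallback would therefore replace the very factors you set out to justify.
\item Your matched-noise argument ($\beta\boldsymbol{\mu}$, $\beta^2\boldsymbol{\Sigma}$) does establish homogeneity of the deviation \emph{vectors}. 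The score, however, is a scale-invariant nonlinear function of the full head outputs, so homogeneity does not transfer to it.
\end{itemize}

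\textbf{How the paper closes this.} It uses a hypothesis rather than a derivation. It posits an abstract measure $\mathcal{I}$ that is additive and positively homogeneous, and applies it directly to $\mathbf{h}_{\mathrm{v}}=\mathbf{A}_{vis}\mathbf{V}_{vis}$ and $\mathbf{h}_{\mathrm{l}}=\mathbf{A}_{lang}\mathbf{V}_{lang}$. Then $\mathcal{I}(\beta\mathbf{h}_{\mathrm{v}})=\beta\,\mathcal{I}(\mathbf{h}_{\mathrm{v}})$ holds by assumption. Identifying $\mathcal{I}$ with the DiD score is exactly what the theorem's qualifier ``local additive approximation'' names. It is an assumption, not something obtained by Taylor-expanding $\mathrm{Sim}$.

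\textbf{Your fix.} Drop the claim that linearizing the cosine score yields homogeneity. State degree-one homogeneity of the measure on the branch outputs as the hypothesis. Your remaining algebra, including the converse and total preservation, then goes through as written. Your proposed projected-deviation norm does not avoid the issue either: the projector depends on the calibrated $\mathbf{H}'_{11}$, so it needs the same frozen-operating-point caveat.
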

\begin{theorem}[Monotonic directional effect of equilibrium factor]
\label{theorem_2}
With non-collinear visual and language components, the cosine alignment between the calibrated attention head output and visual components is strictly increasing regarding $\alpha$.
\end{theorem}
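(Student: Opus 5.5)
We prove Theorem~\ref{theorem_2}. First we write the calibrated output in a two-component form. Let $\mathbf{a} = \mathrm{Softmax}(\cdot)$ be the attention row. Split it into its visual block $\mathbf{a}_{vis}$ and language block $\mathbf{a}_{lang}$, and set
\begin{equation}
\mathbf{v} = \mathbf{a}_{vis}\mathbf{V}^{(t,l,i)}_{vis}, \qquad \mathbf{u} = \mathbf{a}_{lang}\mathbf{V}^{(t,l,i)}_{lang}.
\end{equation}
These are the visual and language components. The calibrated head output is then $\mathbf{o}(\alpha) = \beta\mathbf{v} + \gamma\mathbf{u}$, where $\beta = \alpha/\alpha_{vis}$ and $\gamma = (1-\alpha)/\alpha_{lang}$. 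The softmax weights do not depend on $\alpha$, because the calibration acts only on values.

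The quantities $\alpha_{vis},\alpha_{lang}>0$ are fixed at the current step. Define the rescaled vectors $\mathbf{p} = \mathbf{v}/\alpha_{vis}$ and $\mathbf{r} = \mathbf{u}/\alpha_{lang}$. Then $\mathbf{o}(\alpha) = \alpha\mathbf{p} + (1-\alpha)\mathbf{r}$. Cosine is scale invariant, so $\cos(\mathbf{o},\mathbf{v}) = \cos(\mathbf{o},\mathbf{p})$. It therefore suffices to show that $f(\alpha) = \cos(\alpha\mathbf{p} + (1-\alpha)\mathbf{r}, \mathbf{p})$ is strictly increasing on $(0,1)$.

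A convenient reparametrization is $s = \alpha/(1-\alpha)$, which is strictly increasing in $\alpha$. Since $\alpha<1$, dividing $\mathbf{o}(\alpha)$ by the positive scalar $1-\alpha$ leaves the cosine unchanged. Hence
\begin{equation}
f = g(s) = \cos(s\mathbf{p} + \mathbf{r}, \mathbf{p}).
\end{equation}

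Next we differentiate $g$. Write $\mathbf{w}(s) = s\mathbf{p} + \mathbf{r}$, $P = \|\mathbf{p}\|^2$ and $c = \mathbf{p}^\top\mathbf{r}$. Then
\begin{equation}
g(s) = \frac{sP + c}{\|\mathbf{p}\|\,\|\mathbf{w}\|}, \qquad \|\mathbf{w}\|^2 = s^2P + 2sc + \|\mathbf{r}\|^2.
\end{equation}
The quotient rule, after simplification, gives
\begin{equation}
g'(s) = \frac{P\|\mathbf{r}\|^2 - c^2}{\|\mathbf{p}\|\,\|\mathbf{w}\|^3}.
\end{equation}

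The numerator is a Gram determinant. By Cauchy--Schwarz it is strictly positive exactly when $\mathbf{p}$ and $\mathbf{r}$ are non-collinear. This holds by hypothesis, since non-collinearity of $\mathbf{v},\mathbf{u}$ transfers to $\mathbf{p},\mathbf{r}$. Non-collinearity also gives $\mathbf{w}(s)\neq 0$, so $\|\mathbf{w}\|>0$ and $g$ is smooth on $(0,\infty)$. Thus $g'>0$ there. Since $s(\alpha)$ is strictly increasing, $f$ is strictly increasing in $\alpha$.

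The same argument shows that the cosine with $\mathbf{u}$ is strictly decreasing in $\alpha$. This is a useful complementary remark.

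The main obstacle is interpretive rather than computational: what counts as the ``visual component''. One reading is the aggregated vector $\mathbf{v}$, which we used. Another is $\mathbf{p}$, or the raw $\mathbf{V}_{vis}$ rows. Scale invariance makes $\mathbf{v}$ and $\mathbf{p}$ equivalent, so both readings give the same result. The raw-row reading is genuinely different, and we do not claim it.

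A second point is that $\alpha_{vis},\alpha_{lang}$ must be treated as fixed, i.e.\ measured before calibration, and positive. This is the synergy-head case in which calibration is applied. If they were recomputed after calibration, the dependence on $\alpha$ would become implicit, and we would invoke Theorem~\ref{theorem_1} to justify freezing them under the local additive approximation.
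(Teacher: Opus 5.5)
Your proof is correct and is essentially the paper's argument. The paper sets $\tau=\gamma\|\mathbf{h}_{\mathrm{l}}\|_2/(\beta\|\mathbf{h}_{\mathrm{v}}\|_2)$, which is $\|\mathbf{r}\|/(s\|\mathbf{p}\|)$ in your notation, and with $c=\cos(\mathbf{h}_{\mathrm{v}},\mathbf{h}_{\mathrm{l}})$ writes the alignment as $(1+\tau c)/\sqrt{1+\tau^2+2\tau c}$. Its derivative has numerator $-\tau(1-c^2)$, the normalized form of your Gram determinant $\|\mathbf{p}\|^2\|\mathbf{r}\|^2-(\mathbf{p}^\top\mathbf{r})^2$, and the paper then applies the chain rule with $\partial\tau/\partial\alpha<0$. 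Your version is slightly tidier (no unit normalization, and an explicit check that $\|\mathbf{w}\|>0$), while the paper also adds a conditional extension to multi-head aggregation, residual connection and RMSNorm that goes beyond this single-head statement.
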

Theorem \ref{theorem_1} shows that applying the factors $\beta$ and $\gamma$ to calibrate the value vectors corresponding to visual and language tokens is equivalent to calibrating their information distribution. Theorem \ref{theorem_2} further proves that, by using the equilibrium factor to calibrate the information distribution, the geometric alignment between the attention head output and visual information changes monotonically and increasingly with respect to $\alpha$. We provide the proofs in Appendix \ref{proofs}, and the theoretical analysis of HEAL in Appendix \ref{theory_analysis}.
\subsection{Periodic, Parallelized and Batched Implementation}
\label{efficient_imple}
\textbf{Periodic Type Update.} The type distribution of heads and their internal information are inherently dynamic across generation steps. However, our empirical observations find a temporal locality: the macroscopic type distribution of heads shifts minimally within short generation windows (e.g., 10 to 15 steps). Consequently, we update the global head types periodically at a step interval $S$, while computing the information calibration factor $\beta$ and $\gamma$ at every decoding step.

\textbf{Parallelized Causal Intervention.} During the head type update, we must compute the causal noise intervention for all $H$ attention heads. To avoid sequential evaluation, we accelerate this process via parallelized tensor operations:
\begin{equation}
\widetilde{\mathbf{Y}}^{(t,l)} = \mathbf{1}_H (\mathbf{x}^{(t,l)})^\top + 
\begin{bmatrix}
\tilde{\mathbf{o}}^{(t,l,1)} & \mathbf{o}^{(t,l,2)} & \cdots & \mathbf{o}^{(t,l,H)} \\
\mathbf{o}^{(t,l,1)} & \tilde{\mathbf{o}}^{(t,l,2)} & \cdots & \mathbf{o}^{(t,l,H)} \\
\vdots & \vdots & \ddots & \vdots \\
\mathbf{o}^{(t,l,1)} & \mathbf{o}^{(t,l,2)} & \cdots & \tilde{\mathbf{o}}^{(t,l,H)}
\end{bmatrix} 
(\mathbf{W}_O^{(l)})^\top.
\end{equation}

\textbf{Batched Difference-in-Differences Analysis.} Similarly, the DiD calculation requires evaluating four distinct counterfactual states to disentangle the information distribution. Instead of computing these sequentially, we form the batched tensors $\widehat{\mathbf{q}}^{t,l,i} \in \mathbb{R}^{4 \times B \times H \times 1 \times d_h}$, $\widehat{\mathbf{K}}^{t,l,i}, \widehat{\mathbf{V}}^{t,l,i} \in \mathbb{R}^{4 \times B \times H \times L \times d_h}$ and execute a unified attention operation.
This allows the hardware accelerator to process the factual and counterfactual attention matrices simultaneously.
\section{Experiments}
\subsection{Experimental Setting}
\textbf{Baselines.} To evaluate the generalizability and effectiveness of our method, we conduct experiments on several representative MLLMs, including LLaVA series (LLaVA-1.5-7B \citep{llava_1.5} and LLaVA-NeXT-7B \citep{llavanext}), Qwen series (Qwen2-VL-7B \citep{qwen2vl}, Qwen2.5-VL-7B \citep{qwen25vl} and Qwen3-VL-8B\citep{qwen3vl}), and InternVL series (InternVL-7B \citep{internvl} and InternVL3.5-8B \citep{internvl3.5}).

\textbf{Evaluation Benchmarks.} We perform comprehensive evaluations across two primary categories of benchmarks to assess both general multimodal capabilities and specific hallucination tendencies:

(1) Comprehensive Benchmarks: we use LLaVA-Bench \citep{llava_1.5}, MME \citep{mme}, and BLINK-Twice \citep{blink_twice} to measure the impact of our method on the models' core reasoning and perception abilities.

(2) Hallucination Benchmarks: to specifically quantify hallucination reduction, we employ POPE \citep{pope} for object existence, CHAIR \citep{chair} for fine-grained image captioning, and MMHal-Bench \citep{mmhal_bench} for complex actions and spatial relationships.

\textbf{Hyperparameters.}
We use one model from a specific family to characterize the operating range of $\alpha$, and directly transfer the resulting family-level setting to the remaining models within the same or similar family without further tuning.
For LLaVA and InternVL series, we set $\alpha$ to 0.5 for simple benchmarks such as POPE and CHAIR, and to 0.6 for other challenging and comprehensive benchmarks. For Qwen families, $\alpha$ is set to 0.4 for simpler benchmarks and 0.5 for others. The update interval $S$ is consistently set to 10 generation steps across all experiments. Detailed guidelines and empirical patterns for determining these hyperparameter values are provided in Appendix \ref{para_selection}.
\begin{table}[ht]
\centering
  \caption{Comparison of HEAL with other SOTA methods on POPE, CHAIR, and MME datasets. The best performances are \textbf{bolded} and baseline model is LLaVA-1.5-7B.}
  \footnotesize
  \resizebox{1\textwidth}{!}{ 
  \begin{tabular}{l||cc|cccc|ccccc}
    \toprule
    \multirow{2}{*}{\textbf{Method}} 
    & \multicolumn{2}{c|}{\textbf{POPE}} & \multicolumn{4}{c|}{\textbf{CHAIR}} &\multicolumn{5}{c}{\textbf{MME}}   \\ 
    & \textbf{F1}$\uparrow$ & \textbf{Acc}$\uparrow$ 
    & \textbf{C$_{S}$}$\downarrow$ & \textbf{C$ _{I} $}$\downarrow$ 
    & \textbf{Recall}$\uparrow$ & \textbf{Length} 
    & \textbf{Exist.}$\uparrow$ & \textbf{Count}$\uparrow$ 
    & \textbf{Pos.}$\uparrow$ &\textbf{Color}$\uparrow$ &  \textbf{Total}$\uparrow$  \\ 
    \midrule
    Beam Search & 85.4 & 84.0 & 51.0 & 15.2 & 75.2 &102.2& 175.67 & 124.67 & 114.00 & 151.00 & 565.34 \\ 
    DoLa \citep{dola} & 80.2 & 83.1 & 57.0 & 15.2 & 78.2 & 97.5 &180.10 & 127.40 & 119.30 & 154.60 & 594.10\\
    VCD \citep{leng2024vcd} & 85.3 & 85.0 & 51.0 & 14.9 & 77.2 & 101.9& 184.66 & 137.33 & 128.67 & 153.00 & 603.66  \\
    OPERA \citep{huang2024opera} & 84.2 & 85.2 & 47.0 & 14.6 & 78.5 & 95.3 & 180.67 & 133.33 & 111.67 & 123.33 & 549.00 \\
    DOPRA \citep{dopra} & 84.6 & 84.3 & 46.3 & 13.8 & 78.2 & 96.1 & 185.67 & 138.33 & 120.67 & 133.00 & 577.67 \\
    HALC \citep{halc} & 83.9 & 84.0 & 50.2 & 12.4 & 78.4 & 97.2 & 190.00 &143.30 &128.30 & 160.00 &621.60\\
    EAH \citep{eah2024} & 85.7 & 86.0 & \textbf{36.4} & \textbf{9.9} & 74.9 & 97.7 & 190.00 & 108.33 & 145.00 & 160.66 & 603.99 \\
    SID \citep{sid} & 85.6 & 85.8 & 44.2 & 12.2 & 73.0 &99.4  & 183.90 & 132.20 & 127.80 & 155.90 & 599.80 \\
    VISTA \citep{vista} & 86.3 & 86.2 & - & - & - & - & - & - & - & - & - \\
    TAME \citep{tame} & 85.4 & 85.7 & 41.3 & 12.2 & 74.4 &98.8 &193.00 & 137.33 & 139.00 & 164.67 & 634.00 \\ 
    VAR \citep{see_what} & 86.0 & 86.5 & 52.4 & 14.5 & 79.1 & 103.0 & 190.00 & 148.33 & 138.33 & 155.00 & 631.33\\ 
    CausalLLM \citep{attention_causality} & 86.0 & 86.5& - & -& - & -& 195.00&156.00 & 135.00 &170.00 &656.00\\
    AGLA \citep{agla} & 84.6 & 85.5 & 43.0 & 14.1 & 78.9 & 98.8 & 195.00 &153.89 &129.44 & 161.67 &640.00 \\
    FarSight\citep{far_see} & - & - &41.6 & 13.2 & 75.5 & 100.6 & - & -&-&-&- \\
    MemVR \citep{look_twice} & 87.1 & 87.4 & 46.6 &13.0 & \textbf{80.8} &99.6 & 190.00 & 155.00& 133.33& 170.60&648.30 \\
    ONLY \citep{only} & 85.5 & 85.1 & 49.8 &14.3 & 75.9 &99.7 & 191.67 & 145.55& 136.66& 161.66& 635.55 \\ 
    LocoRE \citep{hallu_saliency} & 86.9 & 87.3 & 38.4 & 11.2 & 75.4 & 98.2 & 190.00 & 158.33 &133.33 & \textbf{175.00} &   656.66 \\
    VHR \citep{vhr} & 85.5 & - & 38.6 & 12.3 & - & 81.3 & - & - & - & - & - \\
    \midrule
    \rowcolor{blue!05}
    \textbf{HEAL} & $\textbf{87.7}_{\pm0.3}$ & $\textbf{88.3}_{\pm0.2}$ & $\underline{36.7}_{\pm0.4}$ & $\underline{10.7}_{\pm0.0}$ & $\underline{79.1}_{\pm0.1}$ & $99.8_{\pm0.7}$  & $\textbf{195.45}_{\pm0.23}$ & $\textbf{158.53}_{\pm0.35}$ & $\textbf{145.12}_{\pm0.48}$ & $170.66_{\pm0.66}$ & $\textbf{669.76}_{\pm1.72}$ \\
    \bottomrule
  \end{tabular}}
  \label{compare_table_hallu} 
\end{table}

\subsection{Evaluation on Hallucination Benchmarks}
As shown in Table \ref{compare_table_hallu}, existing training-free hallucination mitigation methods can be broadly categorized into two groups. The first group (OPERA \citep{huang2024opera}, DOPRA \citep{dopra}, DoLa \citep{dola} VCD \citep{leng2024vcd}, AGLA \citep{agla}, etc.) focuses on correcting the decoding process to reduce hallucinations at inference time, while the second group (TAME\citep{tame}, VAR\citep{see_what}, EAH \citep{eah2024}, VHR \citep{vhr}, FarSight\citep{far_see}, etc.) improves MLLMs' reliability by calibrating attention heads. Our method belongs to the second group, but differs from prior attention head-based approaches by explicitly disentangling the internal information composition and dynamically calibrating modality equilibrium.
On the MME and POPE benchmarks, our method achieves strong and consistent performance gains. Compared with EAH \citep{eah2024}, HEAL reaches a higher recall and longer generation length on CHAIR. We attribute this to the fact that EAH mainly strengthens certain heads, whereas our method avoids over-emphasizing one modality and instead performs an equilibrium reallocation between visual and language information.
TAME \citep{tame} aggregates token-to-token attention scores but largely overlooks the role of visual information, while VAR \citep{see_what} suppresses attention collapse by reinforcing visual information but tends to underweight textual signals. Consequently, both methods may degrade performance on long-form generation benchmarks like CHAIR. In contrast, our calibration strategy preserves the model's language fluency while improving the visual evidence in the generated output. 

\subsection{Evaluation on Comprehensive Benchmarks}
As shown in Table \ref{compare_table_hallu}, the results on the MME dataset show that HEAL consistently achieves higher scores across different evaluation categories. This suggests that our method is effective not only on hallucination-specific benchmarks, but also on a broader set of multimodal reasoning and perception tasks.
In Tables \ref{compare_mllms} and \ref{tab:recent_benchmarks}, we further integrate HEAL as a plug-and-play module into several advanced MLLMs. These results show that our method consistently improves the hallucination-related and comprehensive metrics across different architectures, which demonstrates its strong generalization.

\begin{table}[ht]
    \centering
    \caption{MLLM performance with and without HEAL on the POPE, CHAIR, and LLaVA-Bench.}
    \footnotesize
    \resizebox{0.85\textwidth}{!}{
    \setlength{\tabcolsep}{2pt}  
    \setlength{\extrarowheight}{0pt}
    \renewcommand{\arraystretch}{1.1} 
    \begin{tabular}{l| c | c c c c c}
        \midrule
        \multicolumn{1}{l|}{} & \multicolumn{1}{c|}{\textbf{Comprehensive Benchmark}} & \multicolumn{5}{c}{\textbf{Hallucination Benchmark}} \\
        \multicolumn{1}{l|}{\multirow{-2}{*}{\textbf{Method}}} & \textbf{LLaVA-Bench $\uparrow$} & \textbf{CHAIR$_{S}$ $\downarrow$} & \textbf{CHAIR$_{I}$ $\downarrow$} & \textbf{POPE-R$\uparrow$} & \textbf{POPE-F1$\uparrow$} & \textbf{POPE-A$\uparrow$} \\
        \midrule
        LLaVA-1.5-7B  & 72.5 & 51.0 & 15.2 & 87.0 & 85.4 & 84.0 \\
        \rowcolor{blue!04}
        \textbf{+ HEAL} & \textbf{75.2} & \textbf{36.9} & \textbf{10.7} & \textbf{89.1} & \textbf{87.8} & \textbf{88.5} \\
       \hline
        LLaVA-NeXT-7B  & 81.6 & 29.9 & 9.2 & 87.4 & 86.5 & 84.7 \\
        \rowcolor{blue!04}
        \textbf{+ HEAL} & \textbf{82.2} & \textbf{24.6} & \textbf{7.9} & \textbf{89.9} & \textbf{88.1} & \textbf{89.0} \\
       \hline
         Qwen2.5-VL-7B & 76.8 & 27.2 & 9.0 & 80.4 & 87.4 & 88.4 \\
        \rowcolor{blue!04}
        \textbf{+HEAL} & \textbf{78.5} & \textbf{23.3} & \textbf{8.5} & \textbf{81.3} & \textbf{88.4} & \textbf{89.1} \\  
        \hline
        Qwen2-VL-7B & 75.6 & 25.0 & 7.3 & 79.1 & 86.6 & 87.6 \\
        \rowcolor{blue!04}
        \textbf{+ HEAL } & \textbf{78.0} & \textbf{23.1} & \textbf{6.2} & \textbf{81.9} & \textbf{88.1} & \textbf{88.9} \\
        \hline
        InternVL-7B & 51.6 & 46.6 &  12.4 & 80.0 & 85.3 & 86.2 \\
        \rowcolor{blue!04}
        \textbf{+ HEAL} & \textbf{53.4} & \textbf{39.2} & \textbf{9.6} &\textbf{86.5} & \textbf{87.8} & \textbf{87.9}\\
        \midrule
    \end{tabular}}
    \label{compare_mllms}
\end{table}
\subsection{Ablation Analysis}
\label{sec_ablation}
\textbf{Update interval for head types.}
The head attribution distribution remains relatively stable within short generation windows, which motivates sparse updates instead of per-step recomputation. To quantify this effect, we vary the update interval from 2 to 20 and report the corresponding performance on the POPE dataset in Figure \ref{fig:ablation}(a). While smaller intervals achieve marginally better results, they suffer from higher computational overhead due to more counterfactual analysis. In contrast, update intervals of 10-15 maintain comparable performance while significantly improving decoding efficiency. Additional visualizations of head distribution dynamics across different generation steps are displayed in Appendix \ref{head_dist_interval}.
\begin{figure}[t]
    \centering
    \begin{minipage}{0.55\textwidth}
        \centering
        \includegraphics[width=\linewidth]{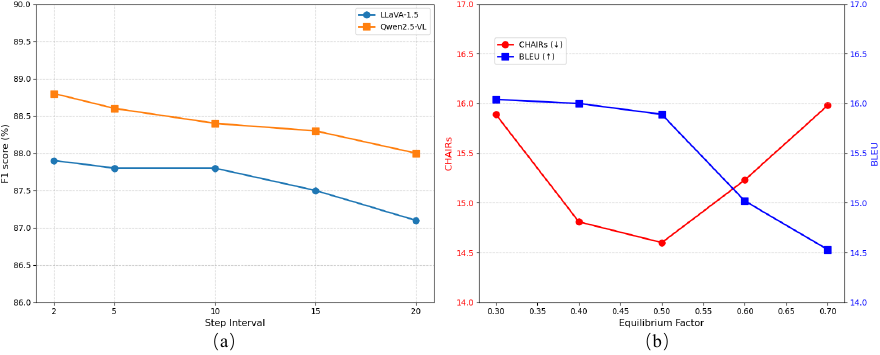}
        \caption{(a) Effect of update interval on model performance. (b) Effect of equilibrium factor on CHAIR$_s$ and BLEU (LLaVA-1.5).}
        \label{fig:ablation}
    \end{minipage}
    \hfill
    \begin{minipage}{0.4\textwidth}
        \centering
        \captionof{table}{LLaVA-Bench scores under different equilibrium factors.}
        \label{tab:ablation_llava_bench}
        \resizebox{0.85\textwidth}{!}{
        \begin{tabular}{c|cc}
            \toprule
            \textbf{Equilibrium Factor} & \textbf{LLaVA-1.5} & \textbf{Qwen2.5-VL} \\
            \midrule
            0.3 & 72.1 & 75.6 \\
            0.4 & 73.6 & 78.3 \\
            0.5 & 74.6 & 78.5 \\
            0.6 & 75.2 & 78.2 \\
            0.7 & 74.8 & 77.9 \\
            \bottomrule
        \end{tabular}}
    \end{minipage}
\end{figure}
\begin{figure}[t]
    \centering
    \includegraphics[width=1.0\linewidth]{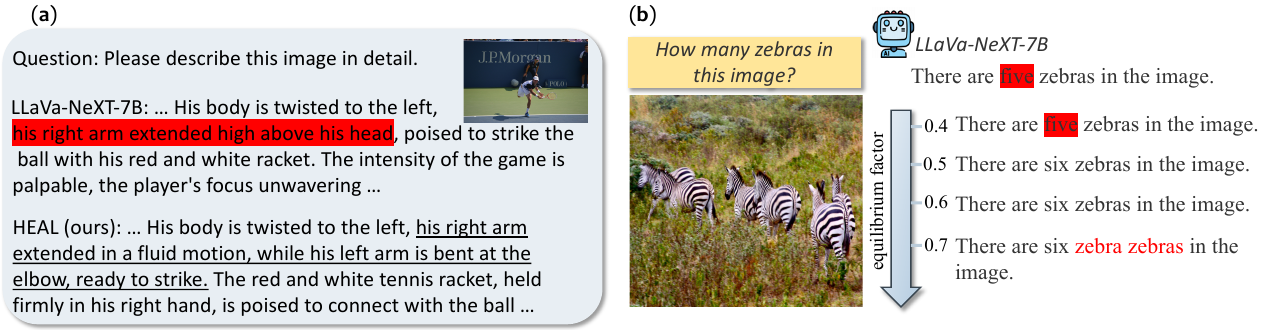}
    \caption{(a) An example of HEAL eliminating the hallucinated content. (b) The generated content under varying settings of the equilibrium factor.}
    \label{fig:demo_heal}
\end{figure}

\textbf{Equilibrium factor $\alpha$.}
We next investigate the effect of the equilibrium factor $\alpha$, which controls the trade-off between visual and language information in synergy heads. We evaluate both hallucination suppression and general generation quality under different settings of $\alpha$. As shown in Figure \ref{fig:ablation}(b), on CHAIR with LLaVA-1.5, the hallucination metric exhibits a U-shaped trend: performance first improves as $\alpha$ increases, but degrades when $\alpha$ becomes too large. This suggests that an intermediate calibration is necessary to reduce hallucination effectively; overly aggressive calibration suppresses language information and may instead harm generation quality.

We observe a similar pattern on LLaVA-Bench for both LLaVA-1.5 and Qwen2.5-VL in Table \ref{tab:ablation_llava_bench}, although the optimal value of $\alpha$ differs across models and tasks. This is expected, as different model architectures and tasks impose different demands on visual evidence. For coarse-grained tasks, limited visual information may be sufficient, whereas fine-grained tasks involving spatial reasoning typically benefit from stronger visual grounding and thus a larger $\alpha$. Moreover, models with higher-quality visual representations can often achieve good performance with a smaller equilibrium factor.

\textbf{Qualitative results.} Figure \ref{fig:demo_heal} provides qualitative examples of hallucination mitigation. In particular, Figure \ref{fig:demo_heal}(b) shows that hallucinations can only be effectively reduced once $\alpha$ reaches an appropriate range; if $\alpha$ is too large, the model may produce ungrammatical or degraded outputs, which harms the overall performance. We also note that some errors remain unresolved even with a large $\alpha$. This is likely because the relevant visual information has already been lost during visual encoding or early fusion, and such cases are better addressed by improving the model architecture or training procedure.
\subsection{Bidirectional Causal Analysis and Robustness Analysis}
\label{bidirectional_causal}
To support our claim that the information disequilibrium in synergy heads can cause hallucinations, we conduct a bidirectional intervention by directly manipulating the visual-language information ratio in Appendix \ref{bidirectional_causal_results}. Specifically, decreasing the visual information proportion can induce hallucinations in originally correct generations, whereas increasing it can alleviate hallucinations in originally hallucinated generations. These results provide stronger evidence that the visual-language information distribution is not merely correlated with hallucination behavior but can be causally intervened upon to change it.
We further conduct comprehensive robustness analyses to examine whether the identified head taxonomy and observed information drift are sensitive to different counterfactual replacements, masking strategies, measurements, and threshold selections. Across these settings, the head assignments and drift patterns remain highly consistent, supporting the stability of our interpretation. Detailed experimental settings and results are provided in Appendix~\ref{more_robustness}.
\section{Conclusion}
In this paper, we reveal that hallucinations occur when information distribution drifts away from the equilibrium in synergy heads, and propose a dynamic information calibration strategy at inference time to mitigate hallucinations. Despite its effectiveness, the calibration factors and update interval are currently determined empirically and may vary across models and tasks. Moreover, hallucinations caused by early visual encoding failures or missing visual evidence cannot always be resolved by attention head calibration alone. These limitations suggest that future work should explore more adaptive calibration policies and tighter integration with model training.

\bibliography{iclr2027_conference}
\bibliographystyle{iclr2027_conference}
\newpage
\appendix
\section{More Related Works}
\textbf{Multimodal Large Language Models.}
MLLMs extend large language models with visual perception, enabling them to encode visual inputs into language-aligned representations and generate instruction-following responses grounded in visual evidence \citep{li2023blip2,qwen3vl,internvl,llavanext,llava_1.5}.
The development of MLLMs has evolved from modular vision-language alignment to more general-purpose multimodal intelligence. Early models such as BLIP-2 \citep{li2023blip2} bridged frozen image encoders and large language models through lightweight alignment modules, while LLaVA demonstrated the effectiveness of visual instruction tuning \citep{liu2023visual}. Subsequent representative models further expanded MLLM capabilities, including MiniGPT-4~\citep{zhu2023minigpt4}, mPLUG-Owl2~\citep{ye2024mplugowl2}, LLaVA-NeXT~\citep{llavanext}, Qwen3-VL~\citep{qwen3vl}, and InternVL-3.5~\citep{internvl3.5}, showing strong performance on multimodal understanding, grounding, document parsing, etc.
Despite these advances, recent studies show that MLLMs may generate fluent but visually inconsistent outputs, including object, attribute, relation and semantic hallucinations \citep{mm_halu_det_survey,mm_hallu_mitigation_survey,pope,chair,semantic_hallu,reefknot}. These hallucinations can undermine user trust, reduce system reliability, and even lead to harmful downstream decisions in real-world applications~\citep{turner2024gpt4osystemcard}.
\section{Proofs}
\label{proofs}
\subsection{Theorem \ref{theorem_1}}
\begin{proof}
For an attention head, we can decompose its output into visual and language components by \Eqref{equ:attention}:
\begin{equation}
\label{equ:attention_decomp}
\mathbf{o}^{(t,l,i)} = [\mathbf{A}^{(t,l,i)}_{vis}, \mathbf{A}^{(t,l,i)}_{lang}]
\begin{bmatrix} \mathbf{V}^{(t,l,i)}_{vis} \\ \mathbf{V}^{(t,l,i)}_{lang} \end{bmatrix}
=\mathbf{A}^{(t,l,i)}_{vis}\mathbf{V}^{(t,l,i)}_{vis} + \mathbf{A}^{(t,l,i)}_{lang}\mathbf{V}^{(t,l,i)}_{lang} = \mathbf{h}_{\mathrm{v}} + \mathbf{h}_{\mathrm{l}},
\end{equation}
where $\mathbf{A}^{(t,l,i)}_{vis}$ and $\mathbf{A}^{(t,l,i)}_{lang}$ denote the attention weights corresponding to the visual and language tokens, respectively. Let's assume the continuous information measure $\mathcal{I}(\cdot)$ is additive and positively homogeneous, i.e.,
\begin{equation}
\mathcal{I}(\lambda \mathbf{z}) = \lambda \mathcal{I}(\mathbf{z}), \qquad \lambda \ge 0.
\end{equation}
Then we can compute the original visual-language ratio as
\begin{equation}
\alpha_{old} = \frac{\mathcal{I}(\mathbf{h}_{\mathrm{v}})}
{\mathcal{I}(\mathbf{h}_{\mathrm{v}}) + \mathcal{I}(\mathbf{h}_{\mathrm{l}})}.
\end{equation}
Given a target equilibrium factor $\alpha \in (0,1)$, we set
\begin{equation}
\beta = \frac{\alpha}{\alpha_{old}}, \qquad
\gamma = \frac{1-\alpha}{1-\alpha_{old}}.
\end{equation}
After calibrating the V vectors, we can get 
\begin{equation}
\mathbf{V}^{'}_{vis} \leftarrow \beta \mathbf{V}_{vis}, \qquad
\mathbf{V}^{'}_{lang} \leftarrow \gamma \mathbf{V}_{lang},
\end{equation}
the visual and language components in \Eqref{equ:attention_decomp} become
\begin{equation}
\label{equ:calibrated_h}
\mathbf{h}'_{\mathrm{v}} = \beta \mathbf{h}_{\mathrm{v}}, \qquad
\mathbf{h}'_{\mathrm{l}} = \gamma \mathbf{h}_{\mathrm{l}}.
\end{equation}
The new information ratio is
\begin{equation}
\begin{aligned}
\alpha_{\mathrm{new}}
&=
\frac{\mathcal{I}(\mathbf{h}'_{\mathrm{v}})}
{\mathcal{I}(\mathbf{h}'_{\mathrm{v}}) + \mathcal{I}(\mathbf{h}'_{\mathrm{l}})} \\
&=
\frac{\beta \mathcal{I}(\mathbf{h}_{\mathrm{v}})}
{\beta \mathcal{I}(\mathbf{h}_{\mathrm{v}}) + \gamma \mathcal{I}(\mathbf{h}_{\mathrm{l}})} \\
&= \alpha.
\end{aligned}
\end{equation}
Therefore, calibrating the value vectors is equivalent to steering the modality information distribution toward the target factor. In our implementation, $\mathcal{I}(\cdot)$ is instantiated by the counterfactual contribution score estimated via difference-in-differences, so the theorem holds up to a local additive approximation of the head output.
\end{proof}
\subsection{Theorem \ref{theorem_2}}
\begin{proof}
According to \Eqref{equ:attention_decomp} and \Eqref{equ:calibrated_h}, the calibrated output of an attention head can be decomposed as
\begin{equation}
\label{eq:attention_decomp_alpha}
    \mathbf{h}(\alpha)
    =
    \beta(\alpha) \mathbf{h}_{\mathrm{v}}
    +
    \gamma(\alpha) \mathbf{h}_{\mathrm{l}}.
\end{equation}
The cosine alignment between the calibrated head output and the visual component is 
\begin{equation}
    A_v(\alpha)
    =\cos\bigl(\mathbf{h}(\alpha),\mathbf{h}_{\mathrm{v}}\bigr)=
    \frac{
        \langle \mathbf{h}(\alpha),\mathbf{h}_{\mathrm{v}}\rangle
    }{
        \|\mathbf{h}(\alpha)\|_2\|\mathbf{h}_{\mathrm{v}}\|_2
    }.
    \label{eq:thm2_cos_def}
\end{equation}
Let
$
c=\frac{\langle \mathbf{h}_{\mathrm{v}},\mathbf{h}_{\mathrm{l}}\rangle}{\|\mathbf{h}_{\mathrm{v}}\|_2\|\mathbf{h}_{\mathrm{l}}\|_2},
$
then
$
-1\le c\le 1.
$
Using \Eqref{eq:attention_decomp_alpha}, the numerator of~\Eqref{eq:thm2_cos_def} becomes
\begin{equation}
    \begin{aligned}
    \langle \mathbf{h}(\alpha),\mathbf{h}_{\mathrm{v}}\rangle
    =
    \beta\|\mathbf{h}_{\mathrm{v}}\|_2^2
    +
    \gamma\langle \mathbf{h}_{\mathrm{l}},\mathbf{h}_{\mathrm{v}}\rangle
    =
    \beta\|\mathbf{h}_{\mathrm{v}}\|_2^2+\gamma \|\mathbf{h}_{\mathrm{v}}\|_2\|\mathbf{h}_{\mathrm{l}}\|_2 c.
    \end{aligned}
    \label{eq:thm2_num}
\end{equation}
Similarly,
\begin{equation}
    \begin{aligned}
    \|\mathbf{h}(\alpha)\|_2^2
    &=
    \|\beta\mathbf{h}_{\mathrm{v}} + \gamma \mathbf{h}_{\mathrm{l}}\|_2^2
    \\
    &=
    \beta^2\|\mathbf{h}_{\mathrm{v}}\|_2^2+\gamma^2\|\mathbf{h}_{\mathrm{l}}\|_2^2+2\beta\gamma \|\mathbf{h}_{\mathrm{v}}\|_2\|\mathbf{h}_{\mathrm{l}}\|_2 c.
    \end{aligned}
    \label{eq:thm2_norm}
\end{equation}
Substituting~\Eqref{eq:thm2_num} and~\Eqref{eq:thm2_norm} into~\Eqref{eq:thm2_cos_def} yields
\begin{equation}
\label{eq:thm2_cos_ab}
\begin{aligned}
    A_v(\alpha)
    & =
    \frac{
        \beta \|\mathbf{h}_{\mathrm{v}}\|_2+\gamma \|\mathbf{h}_{\mathrm{l}}\|_2 c
    }{
        \sqrt{
            \beta^2 \|\mathbf{h}_{\mathrm{v}}\|_2^2 + \gamma^2 \|\mathbf{h}_{\mathrm{l}}\|_2^2 + 2\beta\gamma \|\mathbf{h}_{\mathrm{v}}\|_2\|\mathbf{h}_{\mathrm{l}}\|_2 c
        }
    } \\
    & =
    \frac{
        1+\tau(\alpha)c
    }{
        \sqrt{
            1+\tau(\alpha)^2+2\tau(\alpha)c
        }
    },
\end{aligned}
\end{equation}
where
$
\tau(\alpha)
=
\frac{\gamma\|\mathbf{h}_{\mathrm{l}}\|_2}
{\beta\|\mathbf{h}_{\mathrm{v}}\|_2}.
$
Substituting the explicit forms of $\beta(\alpha)$ and $\gamma(\alpha)$ gives
\begin{equation}
    \frac{\partial\tau(\alpha)}{\partial\alpha}
    =
    -
    \frac{\alpha_{vis}}
    {\alpha_{lang}\alpha^2}
    \frac{\|\mathbf{h}_{\mathrm{l}}\|_2}
    {\|\mathbf{h}_{\mathrm{v}}\|_2}
    <0.
    \label{eq:thm2_tau_derivative}
\end{equation}
It remains to determine how the cosine alignment changes as $\tau$ decreases. Differentiating $A_v(\alpha)$ with respect to $\tau$ gets 
\begin{equation}    
\label{eq:thm2_f_derivative}
    \begin{aligned}
    \frac{\partial A_v(\alpha)}{\partial\tau}
    &=
    c
    \left(
        1+\tau^2+2\tau c
    \right)^{-1/2}
    \\
    &\quad
    -
    (1+\tau c)
    \left(
        1+\tau^2+2\tau c
    \right)^{-3/2}
    (\tau+c) \\
    &=
    \frac{
        c(1+\tau^2+2\tau c)
        -(1+\tau c)(\tau+c)
    }{
        \left(
            1+\tau^2+2\tau c
        \right)^{3/2}
    } \\
    &=-
    \frac{
        \tau(1-c^2)
    }{
        \left(
            1+\tau^2+2\tau c
        \right)^{3/2}
    }
    \le 0,
    \end{aligned}
\end{equation}
because
$
\tau>0, 1-c^2\ge0.
$
Combining~\Eqref{eq:thm2_tau_derivative} and~\Eqref{eq:thm2_f_derivative}, the chain rule gives
\begin{equation}
    \begin{aligned}
    \frac{\partial A_v(\alpha)}{\partial\alpha}
    &=
    \frac{\partial f(\tau)}
    {\partial\tau}
    \frac{\partial\tau(\alpha)}
    {\partial\alpha}
    \\
    &=
    \frac{
        \tau(\alpha)(1-c^2)
    }{
        \left[
            1+\tau(\alpha)^2
            +2\tau(\alpha)c
        \right]^{3/2}
    }
    \frac{\alpha_{vis}}
    {\alpha_{lang}\alpha^2}
    \frac{\|\mathbf{h}_{\mathrm{l}}\|_2}
    {\|\mathbf{h}_{\mathrm{v}}\|_2}
    \ge0.
    \end{aligned}
    \label{eq:thm2_final_derivative}
\end{equation}
Hence, the visual alignment of the calibrated head output is monotonically non-decreasing with respect to the equilibrium factor.
Moreover, because $\tau(\alpha)>0$ for $\alpha\in(0,1)$, the derivative is strictly positive whenever
$
1-c^2>0 \iff |c|<1.
$
Then, unless the visual and language components are collinear, increasing $\alpha$ strictly increases the cosine alignment with the visual component.
\paragraph{Extension to multi-head attention, residual connection, and RMSNorm \citep{rmsnorm}.}
We extend the analysis from a single attention head to a multi-head attention (MHA) layer. Let $\mathcal{H}$ denote the set of all attention heads and $\mathcal{C}\subseteq\mathcal{H}$ denote the subset of calibrated heads. For an uncalibrated head $m\notin\mathcal{C}$, $\mathbf h_m$ is independent of $\alpha$, whereas for a calibrated head $m\in\mathcal{C}$, we write
\begin{equation}
\mathbf h_m(\alpha)
=
\beta_m(\alpha)\mathbf h_{\mathrm v}^{(m)}
+
\gamma_m(\alpha)\mathbf h_{\mathrm l}^{(m)},
\end{equation}
where
$
\beta_m(\alpha)
=
\frac{\alpha}{\alpha_{\mathrm{vis}}^{(m)}},
\gamma_m(\alpha)
=
\frac{1-\alpha}{\alpha_{\mathrm{lang}}^{(m)}}.
$
Since $\mathbf W_O$ is linear, the $\alpha$-dependent part of the MHA output can be expressed as
\begin{equation}
\begin{aligned}
\mathbf h_{\mathrm{MHA}}(\alpha)
=
\mathbf h_{\mathrm{MHA}}^{(0)}
+
\sum_{m\in\mathcal C}
\mathbf W_m
\left[
\beta_m(\alpha)\mathbf h_{\mathrm v}^{(m)}
+
\gamma_m(\alpha)\mathbf h_{\mathrm l}^{(m)}
\right].
\end{aligned}
\label{eq:mha_decomposed}
\end{equation}
where $\mathbf h_{\mathrm{MHA}}^{(0)}$ collects all $\alpha$-independent contributions, including uncalibrated heads, and $\mathbf W_m$ denotes the corresponding block of the output projection matrix associated with head $m$.
We next incorporate the residual connection and the subsequent RMSNorm. Let
$
\mathbf r(\alpha)
=
\mathbf x+\mathbf h_{\mathrm{MHA}}(\alpha),
\label{eq:mha_residual}
$
and
\begin{equation}
\mathbf z(\alpha)
=
\operatorname{RMSNorm}(\mathbf r(\alpha))
=
\frac{G\mathbf r(\alpha)}
{\rho(\mathbf r(\alpha))},
\end{equation}
where
\begin{equation}
G=\operatorname{diag}(\mathbf g), \qquad
\rho(\mathbf r) = \sqrt{\frac{1}{d}\|\mathbf r\|_2^2+\epsilon}>0.
\end{equation}
Applying the fixed RMSNorm gain transformation to the residual representation gives
\begin{equation}
\begin{aligned}
G\mathbf r(\alpha)
&=
G(\mathbf x+\mathbf h_{\mathrm{MHA}}^{(0)})
+
G\left(\sum_{m\in\mathcal C} \mathbf W_m
\left[
\beta_m(\alpha)\mathbf h_{\mathrm v}^{(m)}
+
\gamma_m(\alpha)\mathbf h_{\mathrm l}^{(m)}
\right]\right) \\
&=
G(\mathbf x+\mathbf h_{\mathrm{MHA}}^{(0)})
+
\sum_{m\in\mathcal C}
\left[
\beta_m(\alpha)\bar{\mathbf h}_{\mathrm v}^{(m)}
+
\gamma_m(\alpha)\bar{\mathbf h}_{\mathrm l}^{(m)}
\right]
\end{aligned}
\label{eq:multihead_gain_residual}
\end{equation}
and
\begin{equation}
\mathbf w_{\mathrm{MHA}}
:=
\frac{\partial \left[G\mathbf r(\alpha)\right]}
{\partial\alpha}
=
\sum_{m\in\mathcal C}
\left[
\frac{\bar{\mathbf h}_{\mathrm v}^{(m)}}
{\alpha_{\mathrm{vis}}^{(m)}}
-
\frac{\bar{\mathbf h}_{\mathrm l}^{(m)}}
{\alpha_{\mathrm{lang}}^{(m)}}
\right].
\label{eq:gained_aggregate_calibration_direction}
\end{equation}
Let $\mathbf u_{\mathrm v}$ denote the normalized aggregate visual direction, then
\begin{equation}
\mathbf u_{\mathrm v}
=
\frac{\mathbf v}{\|\mathbf v\|_2},
\qquad
\mathbf v
=
\sum_{m\in\mathcal C}
\omega_m\bar{\mathbf h}_{\mathrm v}^{(m)},
\label{eq:aggregate_visual_direction}
\end{equation}
where $\omega_m\ge0$ specifies the relative contribution of each calibrated head. Since RMSNorm only introduces the positive scalar factor $\rho(\mathbf r(\alpha))^{-1}$ after the fixed gain transformation $G$, the cosine alignment with $\mathbf u_{\mathrm v}$ is
\begin{equation}
A_{\mathrm v}^{\mathrm{out}}(\alpha)
=
\cos\left(
\mathbf z(\alpha),\mathbf u_{\mathrm v}
\right)
=
\cos\left(
G\mathbf r(\alpha),\mathbf u_{\mathrm v}
\right)
=
\frac{
\langle G\mathbf{r}(\alpha),\mathbf u_{\mathrm v}\rangle
}{
\|G\mathbf{r}(\alpha)\|_2\|\mathbf u_{\mathrm v}\|_2
}. 
\label{eq:mha_final_alignment}
\end{equation}
We can decompose the gain-transformed residual representation and its $\alpha$-induced change into components parallel and orthogonal to $\mathbf u_{\mathrm v}$:
\begin{equation}
G\mathbf r(\alpha)
=
s(\alpha)\mathbf u_{\mathrm v}
+
\mathbf p(\alpha),
\qquad
\mathbf p(\alpha)\perp\mathbf u_{\mathrm v},
\label{eq:mha_y_decomposition}
\end{equation}
and
\begin{equation}
\mathbf w_{\mathrm{MHA}}
=
t\mathbf u_{\mathrm v}
+
\mathbf q,
\qquad
\mathbf q\perp\mathbf u_{\mathrm v},
\label{eq:mha_w_decomposition}
\end{equation}
where
$
s(\alpha)
=
\langle G\mathbf r(\alpha),\mathbf u_{\mathrm v}\rangle,
t
=
\langle\mathbf w_{\mathrm{MHA}},\mathbf u_{\mathrm v}\rangle.
$
So 
$
s'(\alpha)=t,
\mathbf p'(\alpha)=\mathbf q.
$
Substituting these decompositions into \Eqref{eq:mha_final_alignment} gives
\begin{equation}
A_{\mathrm v}^{\mathrm{out}}(\alpha)
=
\frac{s(\alpha)}
{\sqrt{s(\alpha)^2+\|\mathbf p(\alpha)\|_2^2}},
\end{equation}
and direct differentiation yields
\begin{equation}
\frac{\partial A_{\mathrm v}^{\mathrm{out}}(\alpha)}
{\partial\alpha}
=
\frac{
t\|\mathbf p(\alpha)\|_2^2
-
s(\alpha)
\langle\mathbf p(\alpha),\mathbf q\rangle
}{
\left(
s(\alpha)^2+\|\mathbf p(\alpha)\|_2^2
\right)^{3/2}
}.
\label{eq:mha_alignment_derivative}
\end{equation}
\Eqref{eq:mha_alignment_derivative}
$
\ge0
$
if and only if
\begin{equation}
\mathcal M_{\mathrm{MHA}}(\alpha)
=
t\|\mathbf p(\alpha)\|_2^2
-
s(\alpha)
\langle\mathbf p(\alpha),\mathbf q\rangle \ge0.
\label{eq:mha_residual_margin}
\end{equation}
Importantly, \Eqref{eq:mha_residual_margin} is imposed on the aggregate MHA representation rather than on individual attention heads. The projection matrix $W_O$ may rotate and mix the contributions from different heads, so head-wise monotonicity is neither necessary nor sufficient for monotonicity of the final multi-head output. Instead, all calibrated heads jointly determine the effective direction $\mathbf w_{\mathrm{MHA}}$ in~\Eqref{eq:gained_aggregate_calibration_direction}.

The margin $\mathcal M_{\mathrm{MHA}}(\alpha)$ has a simple geometric interpretation. The term $t\|\mathbf p(\alpha)\|_2^2$ measures the component of the aggregate MHA-induced motion that increases alignment with the visual direction, whereas $s(\alpha)\langle\mathbf p(\alpha),\mathbf q\rangle$ captures the opposing angular perturbation caused by the orthogonal component of the residual representation. Hence, $\mathcal M_{\mathrm{MHA}}(\alpha)\ge0$ characterizes the regime in which the aggregate visual-directed motion dominates the residual-induced angular perturbation. 
The above analysis establishes monotonicity of the final visual alignment after the complete sequence of multi-head aggregation, residual connection, and RMSNorm.
\end{proof}
\section{Theoretical Analysis of HEAL}
\label{theory_analysis}
\subsection{Theoretical Contributions}
Our theoretical results include:
\begin{itemize}[leftmargin=*, itemsep=3pt, topsep=0pt, parsep=0pt, partopsep=0pt]
    \item First, value vector calibration is equivalent to adjusting the information distribution between visual and language components within synergy heads (Theorem \ref{theorem_1}).
    \item Second, the calibration changes the representation geometry predictably, rather than acting as an empirical heuristic (Theorem \ref{theorem_2}).
\end{itemize}
Beyond these theorems, HEAL also provides a potential information decomposition framework for Partial Information Decomposition (PID) Theory \citep{pid_first,pird,broadcast_pid}, where multimodal information can be interpreted as visual-specific, language-specific, shared (synergistic), and redundant components. Strict PID decomposition generally requires strong mathematical formulations that are difficult to directly instantiate in modern neural representations. HEAL provides a practical decomposition instantiation for multimodal transformers.
\subsection{Discussions between PID Theory and HEAL}
Our decomposition framework is closely related to the classical PID theory \citep{pid_first}, which characterizes the information that multiple sources provide about a target in terms of redundant, unique, and synergistic components.
Let \(X_V\) and \(X_L\) denote the visual and language sources, and \(Y\) a target variable associated with the information represented by a multimodal attention head. PID decomposes the joint mutual information as
\begin{equation}
\begin{aligned}
I(Y;X_V,X_L)
&=
\int_{\mathcal Y}
\int_{\mathcal X_V}
\int_{\mathcal X_L}
p(y,x_V,x_L)
\log
\frac{
p(y,x_V,x_L)
}{
p(y)p(x_V,x_L)
}
\,\mathrm d x_L\,\mathrm d x_V\,\mathrm d y \\
&=
R(Y;X_V,X_L) + U_V(Y;X_V|X_L) + U_L(Y;X_L|X_V) + S(Y;X_V,X_L).
\end{aligned}
\label{eq:pid_joint_mi}
\end{equation}
where \(R\) denotes redundant information shared by the two sources, \(U_V\) and \(U_L\) denote source-specific information, and \(S\) denotes information that emerges only from the joint availability of both sources. Equivalently, using conditional entropy,
\begin{equation}
\begin{aligned}
I(Y;X_V,X_L)
&=
H(Y)-H(Y\mid X_V,X_L)
\\
&=
\int p(y,x_V,x_L)
\log
\frac{
p(y\mid x_V,x_L)
}{
p(y)
}
\,\mathrm d y\,\mathrm d x_V\,\mathrm d x_L .
\end{aligned}
\label{eq:pid_joint_entropy}
\end{equation}
\(I(Y;X_V,X_L)\) measures the uncertainty reduction about \(Y\) obtained by jointly observing the visual and language sources. In PID theory, marginal mutual information contains both redundant and unique information, whereas conditional mutual information contains unique and synergistic information. Thus, we have
\begin{equation}
I(Y;X_V)
=
\int_{\mathcal Y}
\int_{\mathcal X_V}
p(y,x_V)
\log
\frac{
p(y\mid x_V)
}{
p(y)
}
\,\mathrm d x_V\,\mathrm d y = R+U_V,
\label{eq:pid_visual_mi}
\end{equation}
\begin{equation}
I(Y;X_L)
=
\int_{\mathcal Y}
\int_{\mathcal X_L}
p(y,x_L)
\log
\frac{
p(y\mid x_L)
}{
p(y)
}
\,\mathrm d x_L\,\mathrm d y = R+U_L,
\label{eq:pid_language_mi}
\end{equation}
\begin{equation}
\begin{aligned}
I(Y;X_V\mid X_L)
&=
\int
p(y,x_V,x_L)
\log
\frac{
p(y\mid x_V,x_L)
}{
p(y\mid x_L)
}
\,\mathrm d y\,\mathrm d x_V\,\mathrm d x_L
\\
&=
H(Y\mid X_L)-H(Y\mid X_V,X_L) = U_V+S,
\end{aligned}
\label{eq:pid_cond_visual}
\end{equation}
\begin{equation}
\begin{aligned}
I(Y;X_L\mid X_V)
&=
\int
p(y,x_V,x_L)
\log
\frac{
p(y\mid x_V,x_L)
}{
p(y\mid x_V)
}
\,\mathrm d y\,\mathrm d x_V\,\mathrm d x_L
\\
&=
H(Y\mid X_V)-H(Y\mid X_V,X_L) = U_L+S.
\end{aligned}
\label{eq:pid_cond_language}
\end{equation}

Importantly, these mutual information quantities in \ref{eq:pid_visual_mi}-\ref{eq:pid_cond_language} provide only three independent equations for the four unknown PID atoms. Therefore, the redundancy term \(R\) cannot be uniquely determined from Shannon mutual information alone. A specific redundancy function must be introduced \citep{qui,measuring_pid}.
For example, under the original \(I_{\min}\) construction of Williams and Beer \citep{pid_first}, the pointwise specific information of a source \(X\) about an outcome \(y\) is
\begin{equation}
I_{\mathrm{spec}}(y;X)
=
D_{\mathrm{KL}}
\left[
p(x\mid y)
\middle\|
p(x)
\right]
=
\int
p(x\mid y)
\log
\frac{
p(x\mid y)
}{
p(x)
}
\,\mathrm d x .
\label{eq:specific_information}
\end{equation}
The redundancy between the visual and language sources is then
\begin{equation}
R_{\min}(Y;X_V,X_L)
=
\int_{\mathcal Y}
p(y)
\min
\left\{
I_{\mathrm{spec}}(y;X_V),
I_{\mathrm{spec}}(y;X_L)
\right\}
\,\mathrm d y.
\label{eq:imin_continuous}
\end{equation}
Once a redundancy function \(R^\star\) is selected, the remaining PID atoms follow algebraically:
\[
U_V^\star
=
I(Y;X_V)-R^\star,\\
U_L^\star
=
I(Y;X_L)-R^\star,
\]
\begin{align}
S^\star
&=
I(Y;X_V,X_L)
-I(Y;X_V)
-I(Y;X_L)
+R^\star.
\label{eq:pid_synergy_general}
\end{align}
\Eqref{eq:pid_synergy_general} is particularly relevant to \Eqref{equ:syn} in HEAL. It makes explicit that the synergistic atom is not simply the difference between joint and marginal information: the redundancy term must also be restored.
Different choices of \(R^\star\) lead to different PID decompositions. For example, BROJA defines unique and synergistic information through an optimization over distributions having fixed source-target marginals \citep{qui}, while the CCS approach defines redundancy by identifying common pointwise changes in surprisal \citep{measuring_pid}. The existence of these alternatives emphasizes that PID provides a decomposition principle rather than a unique redundancy estimator.

HEAL does not estimate \(p(y,x_V,x_L)\) explicitly. Instead, it observes the change induced in an attention head representation after selectively removing visual and language inputs. For two hidden states \(a\) and \(b\), HEAL defines the representation discrepancy as
\begin{equation}
\begin{aligned}
D(a,b)
&=
1-\operatorname{Sim}(a,b)
\\
&=
\frac{1}{2}
\left[
1-
\frac{
\langle a,b\rangle
}{
\|a\|_2\|b\|_2
}
\right].
\end{aligned}
\label{eq:heal_discrepancy_pid}
\end{equation}
Rather than interpreting \(D(a,b)\) as a Shannon mutual information, we regard it as a task-relevant measure of the information contribution associated with the corresponding counterfactual intervention. In fact, for a distribution of inputs \((V,T)\), the expected counterfactual information is written as
\begin{equation}
\begin{aligned}
\mathcal I
&=
\mathbb E_{(V,T)\sim p(V,T)}
\left[
D
\left(
H(V,T),
H^{\mathrm{cf}}(V,T)
\right)
\right] \\
&=
\int_{\mathcal V}
\int_{\mathcal T}
p(v,t)
\frac{1}{2}
\left[
1-
\frac{
\left\langle
H(v,t),
H^{\mathrm{cf}}(v,t)
\right\rangle
}{
\|H(v,t)\|_2
\|H^{\mathrm{cf}}(v,t)\|_2
}
\right]
\,\mathrm d v\,\mathrm d t.
\label{eq:heal_expected_information}
\end{aligned}
\end{equation}
This expectation-level formulation makes clear the analogy with mutual information: Shannon information averages a pointwise log-likelihood ratio over samples, whereas HEAL averages a pointwise representation discrepancy over counterfactual interventions. The two quantities have different statistical definitions, but they share the same operational structure of measuring information contribution through changes induced by conditioning or intervention.

In HEAL framework (Section \ref{did}), we define
\begin{equation}
D_{00}
=
D(H_{11},H_{00}),
\qquad
D_{10}
=
D(H_{11},H_{10}),
\qquad
D_{01}
=
D(H_{11},H_{01}).
\label{eq:heal_D_notation}
\end{equation}
Thus, the total information contribution of the two modalities is
\begin{equation}
I_{\mathrm{total}}
=
D_{00}
=
\frac{1}{2}
\left[
1-
\frac{
\langle H_{11},H_{00}\rangle
}{
\|H_{11}\|_2\|H_{00}\|_2
}
\right].
\label{eq:heal_total_expanded}
\end{equation}
The visual contribution is obtained by subtracting the discrepancy that remains after masking language:
\begin{equation}
\begin{aligned}
I_{\mathrm{vis}}
&=
D_{00}-D_{10}
=
\frac{1}{2}
\left[
\frac{
\langle H_{11},H_{10}\rangle
}{
\|H_{11}\|_2\|H_{10}\|_2
}
-
\frac{
\langle H_{11},H_{00}\rangle
}{
\|H_{11}\|_2\|H_{00}\|_2
}
\right].
\end{aligned}
\label{eq:heal_visual_expanded}
\end{equation}
Similarly,
\begin{equation}
\begin{aligned}
I_{\mathrm{lang}}
&=
D_{00}-D_{01}
=
\frac{1}{2}
\left[
\frac{
\langle H_{11},H_{01}\rangle
}{
\|H_{11}\|_2\|H_{01}\|_2
}
-
\frac{
\langle H_{11},H_{00}\rangle
}{
\|H_{11}\|_2\|H_{00}\|_2
}
\right].
\end{aligned}
\label{eq:heal_language_expanded}
\end{equation}
The remaining term is obtained by
\begin{equation}
\begin{aligned}
I_{\mathrm{syn}}
&=
I_{\mathrm{total}}
-
I_{\mathrm{vis}}
-
I_{\mathrm{lang}}
\\
&=
D_{00}
-
(D_{00}-D_{10})
-
(D_{00}-D_{01})
\\
&=
D_{10}+D_{01}-D_{00}.
\end{aligned}
\label{eq:heal_syn_expanded}
\end{equation}
The relationship to PID becomes explicit under an information-faithfulness assumption. Suppose that, in expectation over the data distribution, the counterfactual representation discrepancies approximate the corresponding Shannon information quantities:
\begin{equation}
\begin{aligned}
\mathbb E[I_{\mathrm{total}}]
&\approx
I(Y;X_V,X_L),
\\
\mathbb E[I_{\mathrm{vis}}]
&\approx
I(Y;X_V),
\\
\mathbb E[I_{\mathrm{lang}}]
&\approx
I(Y;X_L).
\end{aligned}
\label{eq:heal_pid_faithfulness}
\end{equation}
Substituting the PID identities into the HEAL interaction term gives
\begin{equation}
\begin{aligned}
I_{\mathrm{syn}}
&\approx
I(Y;X_V,X_L)
-
I(Y;X_V)
-
I(Y;X_L)
\\
&=
(R+U_V+U_L+S)
-
(R+U_V)
-
(R+U_L)
\\
&=
S-R.
\end{aligned}
\label{eq:heal_syn_pid_derivation}
\end{equation}
At the same time,
\begin{equation}
\begin{aligned}
I_{\mathrm{vis}}
&\approx
R+U_V,
\\
I_{\mathrm{lang}}
&\approx
R+U_L,
\\
I_{\mathrm{total}}
&\approx
R+U_V+U_L+S.
\end{aligned}
\end{equation}
Thus, the modality-specific scores of HEAL should be understood as modality-attributable information rather than strictly unique information. Let \(R^\star\) be any admissible redundancy function; then
\begin{equation}
\boxed{
\begin{aligned}
U_V^\star
&\approx
I_{\mathrm{vis}}-R^\star,
\\
U_L^\star
&\approx
I_{\mathrm{lang}}-R^\star,
\\
S^\star
&\approx
I_{\mathrm{syn}}+R^\star.
\end{aligned}}
\label{eq:heal_general_pid_reconstruction}
\end{equation}
Indeed,
\begin{equation}
\begin{aligned}
R^\star
+
U_V^\star
+
U_L^\star
+
S^\star
&\approx
R^\star
+
(I_{\mathrm{vis}}-R^\star)
+
(I_{\mathrm{lang}}-R^\star)
+
(I_{\mathrm{syn}}+R^\star)
\\
&=
I_{\mathrm{vis}}
+
I_{\mathrm{lang}}
+
I_{\mathrm{syn}}
\\
&=
I_{\mathrm{total}},
\end{aligned}
\label{eq:heal_pid_exact_reconstruction}
\end{equation}
which demonstrates that an explicit redundancy estimate provides the missing degree of freedom required to transform the HEAL decomposition into a full PID decomposition.

The above derivation also explains a characteristic property of the HEAL synergy score. In strict PID, the synergistic atom \(S\) is defined after explicitly separating the redundancy \(R\). In HEAL, this redundancy is not separately estimated and is implicitly counted in both \(I_{\mathrm{vis}}\) and \(I_{\mathrm{lang}}\). As a result,
$
I_{\mathrm{syn}}^{\mathrm{HEAL}} \approx S-R,
$
rather than \(S\) itself.
Therefore,
\begin{equation}
\boxed{
\begin{aligned}
I_{\mathrm{syn}}>0
&\Longleftrightarrow
S>R,
\\
I_{\mathrm{syn}}=0
&\Longleftrightarrow
S=R,
\\
I_{\mathrm{syn}}<0
&\Longleftrightarrow
S<R.
\label{eq:heal_signed_synergy}
\end{aligned}}
\end{equation}
A negative HEAL synergy score should not be interpreted as negative Shannon synergy. It indicates that shared information is sufficiently large to dominate the net joint interaction. This is also why the signed nature of \(I_{\mathrm{syn}}\) is not inconsistent with the non-negativity of the atoms in the classical PID formulation. The original PID was introduced partly to separate the non-negative synergy and redundancy atoms that become conflated in conventional interaction information.

The preceding results suggest a natural two-stage interpretation of HEAL. First, the counterfactual Difference-in-Differences procedure identifies modality-attributable information directly in the representation space:
\begin{equation}
\left\{
H_{11},H_{10},H_{01},H_{00}
\right\}
\longrightarrow
\left\{
I_{\mathrm{vis}},
I_{\mathrm{lang}},
I_{\mathrm{syn}}
\right\}.
\end{equation}
Second, if a redundancy functional \(R^\star\) is estimated from the underlying joint distribution, the corresponding PID atoms can be obtained through
\begin{equation}
\left\{
I_{\mathrm{vis}},
I_{\mathrm{lang}},
I_{\mathrm{syn}},
R^\star
\right\}
\longrightarrow
\left\{
U_V^\star,
U_L^\star,
R^\star,
S^\star
\right\}.
\end{equation}
This establishes HEAL as a counterfactual, representation-level analogue of PID rather than an exact replacement for a probability-based PID estimator. The distinction is important: the HEAL discrepancy in \Eqref{eq:heal_discrepancy_pid} is based on cosine geometry of hidden representations, whereas Shannon mutual information is based on a log-density ratio. Nevertheless, both constructions quantify the contribution of a source through the change in information available about a target, and the counterfactual factorial design in HEAL provides a natural operational analogue of marginal and joint source access.
This connection is also complementary to existing multimodal information decomposition approaches, which estimate redundancy, uniqueness, and synergy primarily at the modality level \citep{mllm_pid,quantifying_mllm}. In contrast, HEAL applies the decomposition locally to individual attention heads and generation steps, allowing modality interaction to be analyzed and subsequently intervened upon at the level of internal computation.
\section{More Results and Visualization}
\subsection{Bidirectional Causal Analysis}
\label{bidirectional_causal_results}
\label{}
\begin{table}[ht]
    \centering
    \caption{Bidirectional causal intervention in LLaVA-1.5-7B and Qwen2.5-VL-7B by manipulating the visual-language ratio. Performance scores are evaluated on the LLaVA-Bench.}
    \label{tab:causal_ratio}
    \footnotesize
    \resizebox{0.6\textwidth}{!}{
    \setlength{\tabcolsep}{2pt}  
    \setlength{\extrarowheight}{0pt}
    \renewcommand{\arraystretch}{1.0} 
    \begin{tabular}{lcccccc}
        \toprule
        \textbf{Ratio}
        & \textbf{0.3}
        & \textbf{base (no intervening)}
        & \textbf{0.4}
        & \textbf{0.5}
        & \textbf{0.6}
        & \textbf{0.7} \\
        \midrule
        LLaVA-1.5-7B & 72.1 & 72.5 & 73.6 & 74.6 & 75.2 & 74.8 \\
        \addlinespace
        Qwen2.5-VL-7B & 75.6 & 76.8 & 78.3 & 78.5 & 78.2 & 77.9 \\
        \bottomrule
    \end{tabular}}
\end{table}
\begin{figure}[h]
    \centering
    \includegraphics[width=0.9\linewidth]{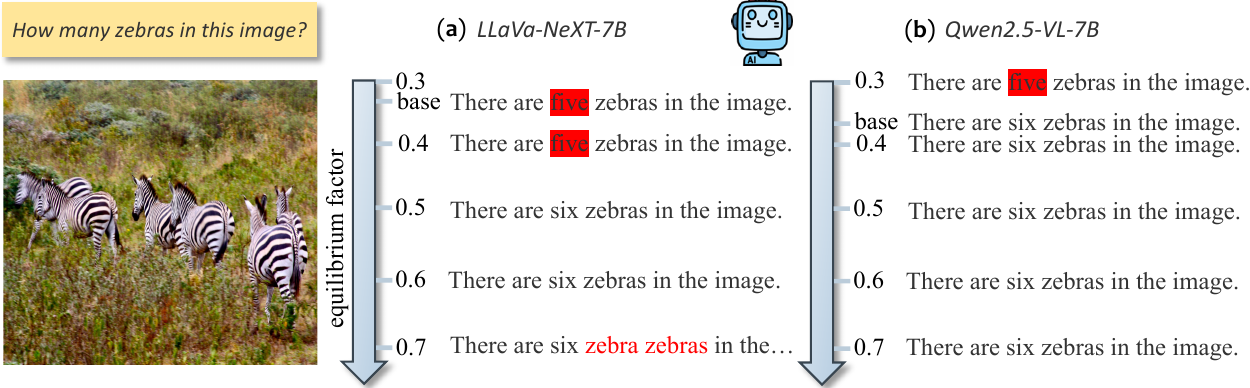}
    \caption{Bidirectional causal intervention in LLaVA-1.5-7B and Qwen2.5-VL-7B replies by manipulating the visual-language information distribution (equilibrium factor). \textit{base} means no intervention.}
    \label{fig:bidirectional_causal}
\end{figure}
We conduct a bidirectional causal intervention analysis by directly manipulating the visual-language information ratio. As shown in Figure \ref{fig:bidirectional_causal} and Table \ref{tab:causal_ratio}, the results demonstrate two causal directions:
\begin{itemize}[leftmargin=*, itemsep=0pt, topsep=0pt, parsep=0pt, partopsep=0pt]
    \item When the model originally produces a correct response, artificially decreasing the visual information proportion can induce hallucination.
    \item When the model originally produces hallucination, increasing the visual information proportion can alleviate hallucination.
\end{itemize}
Therefore, the observed relationship between hallucination and visual-language disequilibrium is not merely correlational. The intervention results show that manipulating this distribution leads to predictable changes in hallucination behavior, providing stronger evidence for our causal claim.
\subsection{Robustness Results and Details}
\label{more_robustness}
The credibility of our interpretation depends on whether the proposed head taxonomy and the observed drift remain stable under different design choices. To support the central claim, we conduct a comprehensive robustness analysis.

\textbf{Experiment setting}. We sampled 50 image-caption pairs generated by LLaVA-Next-7B and Qwen2.5-VL-7B on the CHAIR benchmark, covering both correct and various hallucinated tokens (approximately 9,000 tokens in total). For each token, every attention head is assigned a label, and we further computed the visual-language information ratio within the identified synergy heads for both correct and hallucinated tokens. The reported Head Assignment Agreement is defined as the percentage of head labels that remain identical to those obtained using the default setting across all tokens.

\begin{table}[ht]
    \centering
    \caption{Head type assignment agreement under different masking strategies.}
    \label{tab:masking_agreement}
    \footnotesize
    \resizebox{1.0\textwidth}{!}{
    \setlength{\tabcolsep}{2pt}  
    \setlength{\extrarowheight}{0pt}
    \renewcommand{\arraystretch}{1.0} 
    \begin{tabular}{lcccc}
        \toprule
        \textbf{Masking strategy}
        & \textbf{Gaussian masking (base)}
        & \textbf{one-point (zero) masking}
        & \textbf{uniform masking}
        & \textbf{swapping actual tokens} \\
        \midrule
        Head Assignment Agreement (\%)
        & 100.00
        & 92.13
        & 93.21
        & 95.36 \\
        \bottomrule
    \end{tabular}}
\end{table}
\begin{table}[ht]
    \centering
    \caption{Robustness to different masking strategies. We report the visual-language information ratios of correct and hallucinated tokens in synergy heads and LLaVA-1.5-7B performance on POPE.}
    \label{tab:masking_robust}
    \footnotesize
    \resizebox{1.0\textwidth}{!}{
    \setlength{\tabcolsep}{2pt}  
    \setlength{\extrarowheight}{0pt}
    \renewcommand{\arraystretch}{1.1} 
    \begin{tabular}{lcccc}
        \toprule
        \textbf{Masking strategy}
        & \textbf{Gaussian masking (base)}
        & \textbf{one-point (zero) masking}
        & \textbf{uniform masking}
        & \textbf{swapping actual tokens} \\
        \midrule
        F1 score $\uparrow$
        & 87.84
        & 87.12
        & 86.93
        & 87.53 \\
        \hdashline
        correct tokens
        & 0.43:0.54
        & 0.41:0.53
        & 0.39:0.48
        & 0.41:0.56 \\
        hallucinated tokens
        & 0.28:0.62
        & 0.21:0.52
        & 0.23:0.57
        & 0.31:0.66 \\
        \bottomrule
    \end{tabular}}
\end{table}
\textbf{(1) Robustness to masking strategies.}
To validate whether our conclusions depend on the specific masking strategy, we further consider zero masking, uniform masking, and swapping the actual visual/language tokens from another image or text. As shown in Table~\ref{tab:masking_agreement}, the resulting head taxonomy is highly similar across all masking strategies, achieving 92.13\%-95.36\% head assignment agreement. 
More importantly, from Table~\ref{tab:masking_robust}, the central claim that the visual-language information drift between correct and hallucinated tokens, remains consistently observable under every masking strategy. Additionally, all masking strategies yield comparable downstream performance on POPE, with Gaussian masking achieving the best. These results indicate that our taxonomy and the observed drift are properties of the model itself rather than artifacts of a particular masking strategy.

\begin{table}[ht]
    \centering
    \caption{Robustness to threshold selection. We report the visual-language
    information ratios of correct and hallucinated tokens in synergy heads
    and LLaVA-1.5-7B performance on POPE.}
    \label{tab:threshold_ablation}
    \begin{subtable}[t]{0.48\columnwidth}
        \centering
        \caption{$\sigma_{total}$ coefficient in \Eqref{equ:info_redundant_criteria}}
        \label{tab:sigma_total_ablation}
        \scriptsize
        \setlength{\tabcolsep}{2pt}
        \renewcommand{\arraystretch}{1.1}
        \begin{tabular}{lcccc}
            \toprule
            $\sigma_{total}$ coefficient
            & \textbf{1}
            & \textbf{2}
            & \textbf{3}
            & $\mathbf{\infty}$ \\
            \midrule
            correct tokens
            & 0.45:0.51
            & 0.40:0.49
            & 0.43:0.54
            & 0.45:0.52 \\
            hallucinated tokens
            & 0.24:0.63
            & 0.23:0.61
            & 0.28:0.62
            & 0.29:0.66 \\
            \hdashline
            F1 score $\uparrow$
            & 87.06
            & 87.43
            & \textbf{87.84}
            & 87.65 \\
            \bottomrule
        \end{tabular}
    \end{subtable}
    \hfill
    \begin{subtable}[t]{0.48\columnwidth}
        \centering
        \caption{MAD coefficient}
        \label{tab:mad_ablation}
        \scriptsize
        \setlength{\tabcolsep}{2pt}
        \renewcommand{\arraystretch}{1.1}
        \begin{tabular}{lcccc}
            \toprule
            MAD coefficient ($\lambda$)
            & \textbf{1.4826}
            & \textbf{2.9652}
            & \textbf{4.4478}
            & $\mathbf{\infty}$ \\
            \midrule
            correct tokens
            & 0.43:0.57
            & 0.43:0.54
            & 0.47:0.52
            & 0.38:0.49 \\
            hallucinated tokens
            & 0.19:0.56
            & 0.28:0.62
            & 0.20:0.66
            & 0.23:0.65 \\
            \hdashline
            F1 score $\uparrow$
            & 86.95
            & \textbf{87.84}
            & 87.51
            & 86.23 \\
            \bottomrule
        \end{tabular}
    \end{subtable}
    \vspace{6pt}
    \parbox{0.98\columnwidth}{%
    \scriptsize
    \textit{Note.}
    We use MAD to robustly estimate dispersion, where $1.4826 \approx 1/\Phi^{-1}(0.75)$ is the standard consistency factor under normality. Thus, 1.4826, 2.9652, and 4.4478 correspond approximately to the conventional $1\sigma$, $2\sigma$, and $3\sigma$ thresholds, respectively.
    }
\end{table}
\textbf{(2) Robustness to threshold selection.}
The $3\sigma_{total}$ criterion and $\lambda\cdot$MAD modality-ratio threshold indeed affect the distribution of head types because they explicitly define the classification boundaries.
As shown in Tables~\ref{tab:sigma_total_ablation} and \ref{tab:mad_ablation}, although the proportion of heads assigned to each category changes moderately, the fundamental observation remains unchanged: hallucinated tokens consistently exhibit a significant distribution shift toward language representations in synergy heads. The POPE performance is also stable, with the best performance achieved around the standard $3\sigma_{total}$ information redundancy criterion and $\lambda=2.9652$. Overall, these experiments show that while exact head assignments naturally vary with different decision boundaries, the existence of synergy heads, the distribution disequilibrium, and the effectiveness of HEAL remain remarkably stable.

\begin{table}[ht]
    \centering
    \caption{Head assignment agreement under different measurement metrics.}
    \label{tab:metric_agreement}
    \footnotesize
    \resizebox{1.0\textwidth}{!}{
    \setlength{\tabcolsep}{4pt}  
    \setlength{\extrarowheight}{0pt}
    \renewcommand{\arraystretch}{1.1} 
    \begin{tabular}{lc|c|c}
        \toprule
        \textbf{Metric}
        & \textbf{normalized cosine similarity}
        & \textbf{$L_2$ norm + Sigmoid}
        & \textbf{cosine similarity + $L_2$ norm + Sigmoid} \\
        \midrule
        Head Assignment Agreement (\%)
        & 100.00 (base)
        & 95.73
        & 97.98 \\
        \bottomrule
    \end{tabular}}
\end{table}
\textbf{(3) Robustness to measurement metrics.}
We further evaluate alternative head scoring metrics, including normalized $L_2$ norm and a hybrid cosine$+L_2$ metric. In Table \ref{tab:metric_agreement}, all metrics produce highly similar head assignments, suggesting that the proposed taxonomy is largely independent of the specific metric.
We adopt cosine similarity because it is most consistent with the geometric interpretation of HEAL. As illustrated in Figure \ref{fig:heal} and theoretically supported by Theorem \ref{theorem_2}, HEAL performs a directional calibration of the output representation by adjusting the equilibrium between visual and language components. Cosine similarity directly measures this angular change, providing a more interpretable geometric characterization than magnitude-based metrics such as $L_2$ norm.

\begin{table}[ht]
    \centering
    \caption{Robustness to different replacement distributions in Section \ref{causal_intervention}. We report the Head Assignment Agreement metric and LLaVA-1.5-7B performance on POPE.}
    \label{tab:replacement_robust}
    \scriptsize
    \resizebox{1.0\textwidth}{!}{
    \setlength{\tabcolsep}{4pt}  
    \setlength{\extrarowheight}{0pt}
    \renewcommand{\arraystretch}{1.2}
    \begin{tabular}{l|c|c|c|c|c|c}
        \toprule
        & \textbf{no intervention}
        & \textbf{Gaussian (base)}
        & \textbf{one-point (zero)}
        & \textbf{uniform}
        & \textbf{Cauchy (extreme outliers)}
        & \textbf{swapping actual outputs} \\
        \midrule
        Head Assignment Agreement (\%)
        & -
        & 100.00
        & 95.25
        & 96.31
        & 97.46
        & 96.97 \\
        \hdashline
        F1 score $\uparrow$
        & 86.89
        & 87.84
        & 87.73
        & 87.69
        & 87.81
        & 87.75 \\
        \bottomrule
    \end{tabular}}
    
    \vspace{2pt}
    \parbox{0.96\columnwidth}{%
    \scriptsize
    \textit{Note.}
    Every attention head is only assigned a binary label (redundant or non-redundant) to compute Head Assignment Agreement.}
\end{table}
\textbf{(4) Robustness to replacement distributions.}
In Section \ref{causal_intervention}, our method does not assume that attention head outputs follow a Gaussian distribution. The Gaussian distribution is only used as a replacement noise during the causal intervention, rather than as a probabilistic model of the underlying activations. The objective of this intervention is simply to remove the instance-specific information carried by a head so that its causal contribution can be estimated.

To verify whether the choice of replacement affects causally redundant head identification, we further replace Gaussian noise with several different alternatives, including a one-point distribution (zero replacement), a uniform distribution, a heavy-tailed Cauchy distribution containing extreme outliers, and swapping the head outputs from another sample. As reported in Table~\ref{tab:replacement_robust}, all replacements have highly consistent classifications. These results indicate that redundant head identification is largely distribution-independent, and our causal intervention is robust even under heavy-tailed perturbations or real activation replacement.
More importantly, the hallucination mitigation performance also remains stable. Replacing the Gaussian distribution with these alternatives produces very similar POPE F1 scores, while all intervention strategies consistently outperform the variant without causal intervention. This further demonstrates that the success of this stage stems from the intervention itself rather than the specific choice of the replacement distribution.

Regarding swapping the actual head outputs, it may provide a meaningful alternative for estimating head importance. However, such a strategy requires an additional reference sample during inference, making it impractical for the inference-time setting. In contrast, Gaussian replacement is computationally efficient, low-order moment-preserving, and can be performed directly on the current input without introducing external samples.

\begin{table}[ht]
    \centering
    \caption{Redundant head assignment agreement before and after
    Yeo-Johnson transformation.}
    \label{tab:yeo_johnson}
    \scriptsize
    \resizebox{1.0\textwidth}{!}{
    \setlength{\tabcolsep}{4pt}  
    \setlength{\extrarowheight}{0pt}
    \renewcommand{\arraystretch}{1.1}
    \begin{tabular}{lcc}
        \toprule
        Metric
        & \textbf{before Yeo-Johnson transformation \textbf{(base)}}
        & \textbf{after Yeo-Johnson transformation}
        \\
        \midrule
        Head Assignment Agreement (\%)
        & 100.00
        & 98.25
        \\
        \bottomrule
    \end{tabular}}
\end{table}
\begin{table}[ht]
    \centering
    \caption{Robustness analysis on the $\sigma(I^{(t,l,:)})$ coefficient. We report LLaVA-1.5-7B performance on POPE.}
    \label{tab:sigma_tl_ablation}
    \scriptsize
    \resizebox{0.5\textwidth}{!}{
    \setlength{\tabcolsep}{4pt}  
    \setlength{\extrarowheight}{0pt}
    \renewcommand{\arraystretch}{1.05}
    \begin{tabular}{lcccc}
        \toprule
        $\sigma(I^{(t,l,:)})$ coefficient
        & \textbf{1}
        & \textbf{2}
        & \textbf{3}
        & $\mathbf{\infty}$ \\
        \midrule
        F1 score $\uparrow$
        & 86.97
        & 87.21
        & \textbf{87.84}
        & 87.89 \\
        \bottomrule
    \end{tabular}}
\end{table}
\textbf{(5) Why $\mathbf{3\sigma(I^{(t,l,:)})}$ criterion.} 
The Gaussian assumption in Section \ref{causal_intervention} is not imposed on the intervention noise itself, but on the distribution of the estimated head contribution scores, which is required for applying the $3\sigma$ criterion to identify causally redundant heads.

To verify this assumption, we conduct an additional statistical analysis. For each token, we computed the contribution score of every attention head and performed a Lilliefors normality test \citep{lilliefors}, which tests the null hypothesis that a sample is drawn from a normal distribution when the mean and variance are unknown. In almost all cases, the obtained $p$-values are greater than 0.05, indicating that we cannot reject the normality hypothesis for the contribution scores.

We also applied a Yeo-Johnson transformation \citep{yeo_johnson} to further Gaussianize the score distribution and repeated the redundant-head identification. As shown in Table~\ref{tab:yeo_johnson}, the head assignment agreement before and after the transformation reaches 98.25\%, indicating that the identified redundant heads are almost unchanged. In Table \ref{tab:sigma_tl_ablation}, the F1 score is also stable under $1\sigma$, $2\sigma$, and $3\sigma$ settings, with the best achieved around the standard $3\sigma$ causal redundancy criterion.
This demonstrates that the causally redundant head identification is robust and that the $3\sigma$ criterion is well justified in practice.

From a statistical perspective, each head contribution is an aggregate effect of numerous independent factors (e.g., token context, modality interaction). Such aggregated quantities are often empirically well approximated by a Gaussian distribution, which is also consistent with the normality tests.
\subsection{Head Distribution Dynamics in an Update Interval}
\label{head_dist_interval}
\begin{figure}[ht]
    \centering
    \includegraphics[width=1.0\linewidth]{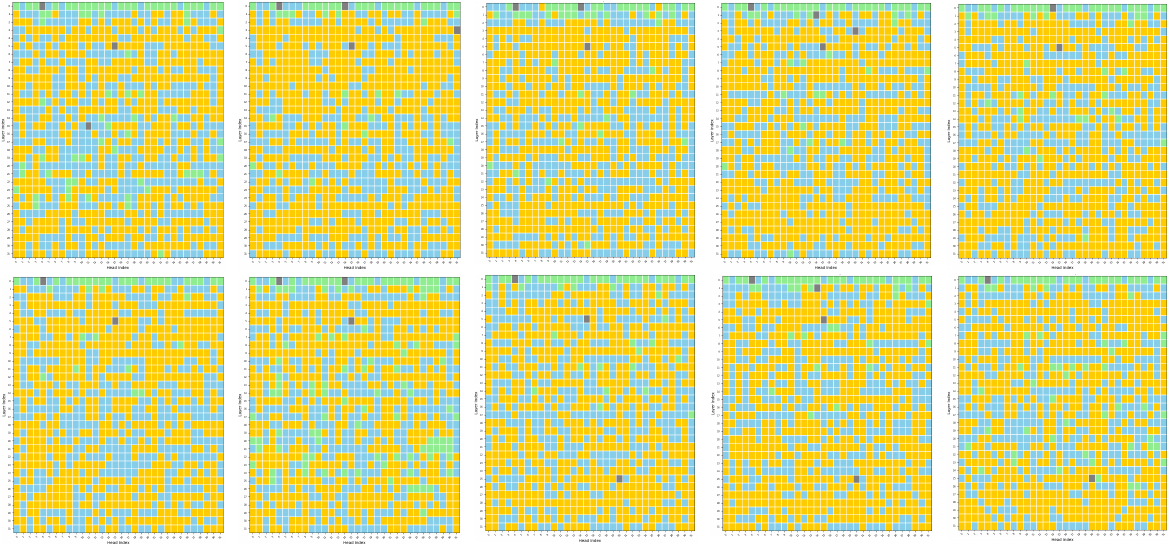}
    \caption{Head distribution in an update interval of 10 steps. We observe that the macroscopic distribution of head attributes shows negligible variation, justifying our periodic update strategy.}
    \label{fig:head_dist_interval}
\end{figure}
\subsection{Empirical Analysis on the Equilibrium Factor and Update Interval}
\label{para_selection}
At present, we cannot provide a closed-form theoretical rule for determining the optimal values of these hyperparameters. However, our empirical analysis reveals that they exhibit stable operating ranges across different MLLM families and tasks, making parameter selection easier in practice.

\begin{table}[ht]
    \centering
    \caption{LLaVA-Bench score and $C_S$ (CHAIR) under different equilibrium factors.}
    \label{tab:equilibrium_factor}
    \scriptsize
    \resizebox{1.0\textwidth}{!}{
    \setlength{\tabcolsep}{2pt}  
    \setlength{\extrarowheight}{0pt}
    \renewcommand{\arraystretch}{1.05}
    \begin{tabular}{ccccc}
        \toprule
        \textbf{Equilibrium Factor}
        & \textbf{LLaVA-Bench (LLaVA-1.5-7B) $\uparrow$}
        & \textbf{LLaVA-Bench (Qwen2.5-VL-7B) $\uparrow$}
        & \textbf{$C_S$ (LLaVA-1.5-7B) $\downarrow$}
        & \textbf{$C_S$ (Qwen2.5-VL-7B) $\downarrow$} \\
        \midrule
        0.3 & 72.1 & 75.6 & 39.0 & 23.8 \\
        0.4 & 73.6 & 78.3 & 37.1 & \textbf{23.3} \\
        0.5 & 74.6 & \textbf{78.5} & \textbf{36.9} & 23.5 \\
        0.6 & \textbf{75.2} & 78.2 & 37.4 & 24.1 \\
        0.7 & 74.8 & 77.9 & 38.1 & 24.9 \\
        \bottomrule
    \end{tabular}}
\end{table}
\begin{table}[ht]
    \centering
    \caption{LLaVA-Bench score and POPE-F1 under different update intervals.}
    \label{tab:update_interval}
    \scriptsize
    \resizebox{1.0\textwidth}{!}{
    \setlength{\tabcolsep}{2pt}  
    \setlength{\extrarowheight}{0pt}
    \renewcommand{\arraystretch}{1.05}
    \begin{tabular}{ccccc}
        \toprule
        \textbf{Update Interval}
        & \textbf{LLaVA-Bench (LLaVA-1.5-7B) $\uparrow$}
        & \textbf{LLaVA-Bench (Qwen2.5-VL-7B) $\uparrow$}
        & \textbf{POPE-F1 (LLaVA-1.5-7B) $\uparrow$}
        & \textbf{POPE-F1 (Qwen2.5-VL-7B) $\uparrow$} \\
        \midrule
        2  & 75.5 & 78.7 & 87.9 & 88.8 \\
        5  & 75.3 & 78.7 & 87.8 & 88.6 \\
        10 & \textbf{75.2} & \textbf{78.5} & \textbf{87.8} & \textbf{88.4} \\
        15 & 75.0 & 78.2 & 87.5 & 88.3 \\
        20 & 74.7 & 77.8 & 87.1 & 88.0 \\
        \bottomrule
    \end{tabular}}
\end{table}
\paragraph{Equilibrium factor.}
As shown in Table~\ref{tab:equilibrium_factor} together with Figure \ref{fig:ablation}(b), the optimal equilibrium factor consistently falls within \textbf{0.4-0.6} across both the LLaVA and Qwen model families. Although the exact optimum varies slightly across datasets, the performance remains stable within this interval.

Moreover, we observe an interesting empirical trend: models with stronger visual understanding (e.g., the Qwen series) generally achieve their best performance with a relatively smaller equilibrium factor, suggesting that such models require less additional visual calibration because their visual representations are already more reliable. In contrast, models with relatively weaker visual grounding (e.g., LLaVA) benefit from slightly larger values of $\alpha$. Therefore, rather than requiring exhaustive tuning, we recommend selecting $\alpha$ within 0.4-0.6, followed by only minor adjustments according to the visual capability of the target model.
\paragraph{Update interval.}
Similarly, Figure \ref{fig:ablation}(a) together with Table~\ref{tab:update_interval} shows that the performance is highly similar when the update interval is chosen between \textbf{5 and 15} decoding steps, while we use 10 as the default setting throughout the paper. Importantly, unlike the equilibrium factor, we find that the update interval transfers well across different tasks and datasets, indicating that it is considerably less sensitive to downstream applications.

Finally, we would like to note that these parameters are analogous to commonly used inference-time hyperparameters such as temperature or top-$p$, which are intentionally exposed to users to flexibly control the decoding behavior. In our case, the equilibrium factor provides an interpretable control over the balance between visual evidence and language information. As theoretically supported by Theorem \ref{theorem_2}, increasing $\alpha$ continuously shifts the output representation toward the visual direction, allowing users to explicitly control how strongly the model relies on visual evidence.

\begin{table}[ht]
    \centering
    \caption{Intern-VL performance comparison with our and other methods.}
    \label{tab:internvl_comparison}
    \scriptsize
    \resizebox{0.9\textwidth}{!}{
    \setlength{\tabcolsep}{4pt}  
    \setlength{\extrarowheight}{0pt}
    \renewcommand{\arraystretch}{1.05}
    \begin{tabular}{lcccccc}
        \toprule
        \textbf{Method}
        & \textbf{LLaVA-Bench$\uparrow$}
        & \textbf{CHAIR$_S\downarrow$}
        & \textbf{CHAIR$_I\downarrow$}
        & \textbf{POPE-R$\uparrow$}
        & \textbf{POPE-F1$\uparrow$}
        & \textbf{POPE-A$\uparrow$} \\
        \midrule
        InternVL-7B
        & 51.6 & 46.6 & 12.4 & 80.0 & 85.3 & 86.2 \\
        +LocoRE~\citep{hallu_saliency}
        & 52.8 & 40.2 & 10.5 & 85.8 & 87.2 & 87.3 \\
        +HEAL
        & \textbf{53.4}
        & \textbf{39.2}
        & \textbf{9.6}
        & \textbf{86.5}
        & \textbf{87.8}
        & \textbf{87.9} \\
        \bottomrule
    \end{tabular}}
\end{table}
To further verify the transferability of these practical guidelines, we follow the above strategy for the other models in our experiments. For example, Intern-VL adopts the same parameter setting as the LLaVA family due to their similar visual capability. Although we do not separately tune these hyperparameters for Intern-VL, HEAL still consistently improves all evaluation metrics (Table~\ref{tab:internvl_comparison}), suggesting that the proposed parameter ranges generalize well to other MLLMs.
\subsection{Performance on Recent Benchmarks and MLLMs}
\begin{table}[ht]
    \centering
    \caption{HEAL performance on recent benchmarks and MLLMs.}
    \label{tab:recent_benchmarks}
    \footnotesize
    \resizebox{0.7\textwidth}{!}{
    \setlength{\tabcolsep}{2pt}
    \setlength{\extrarowheight}{0pt}
    \renewcommand{\arraystretch}{1.1}
    \begin{tabular}{lccccccc}
        \toprule
        \multirow{2}{*}{\textbf{Method}}
        & \multicolumn{2}{c}{\textbf{MMHal-Bench}}
        & \multicolumn{5}{c}{\textbf{BLINK-Twice}} \\
        \cmidrule(lr){2-3}
        \cmidrule(lr){4-8}
        &
        \textbf{Halluc. Rate}$\downarrow$
        & \textbf{Score}$\uparrow$
        & \textbf{No-Acc}
        & \textbf{Yes-Acc}
        & \textbf{Q-Acc}
        & \textbf{I-Acc}
        & \textbf{G-Acc} \\
        \midrule
        Qwen3-VL-8B
        & 17.5 & 4.82 & 0.476 & 0.653 & 0.493 & 0.336 & 0.140 \\
        \rowcolor{blue!04}
        \textbf{+HEAL}
        & 16.6 & 4.91 & 0.557 & 0.714 & 0.536 & 0.353 & 0.213 \\
        \hline
        InternVL3.5-8B
        & 19.4 & 4.53 & 0.355 & 0.588 & 0.475 & 0.264 & 0.109 \\
        \rowcolor{blue!04}
        \textbf{+HEAL}
        & 18.3 & 4.71 & 0.463 & 0.624 & 0.583 & 0.334 & 0.158 \\
        \bottomrule
    \end{tabular}}
\end{table}
We conducted additional experiments on Qwen3-VL-8B \citep{qwen3vl} and InternVL3.5-8B \citep{internvl3.5}, two recent open-source MLLMs, and evaluated them on MMHal-Bench \citep{mmhal_bench} (ACL 2024) and BLINK-Twice \citep{blink_twice} (NeurIPS 2026), which provide more challenging evaluations of multimodal hallucination and visual reasoning than earlier benchmarks.

As shown in Table \ref{tab:recent_benchmarks}, HEAL consistently improves performance across both models and both benchmarks. On Qwen3-VL-8B, HEAL reduces the hallucination rate from 17.5 to 16.6 while improving the MMHal score from 4.82 to 4.91. On InternVL3.5-8B, the hallucination rate decreases from 19.4 to 18.3, and the MMHal score improves from 4.53 to 4.71. On BLINK-Twice, HEAL consistently improves all evaluation metrics, including No-Acc, Yes-Acc, Q-Acc, I-Acc, and G-Acc, showing that the proposed dynamic information calibration generalizes well to recent MLLMs and challenging visual reasoning scenarios.
\section{Further Discussions on HEAL}
\textbf{(1) Do not resolve hallucinations that stem from early visual encoding failures or a fundamental lack of visual evidence in the input.}

HEAL is an inference-time calibration framework that operates on the internal attention representations during decoding. Its objective is to calibrate the contributions of visual and language information when both sources of information are already available in the model. Consequently, if the visual encoder fails to extract reliable visual features, or if the input itself lacks sufficient visual evidence, there is no additional visual information that can be recovered through attention calibration alone.

\textbf{(2) HEAL applies a single, global equilibrium factor to all synergy heads. Do all synergy heads truly behave uniformly?}

In fact, our empirical observations show that almost every synergy head exhibits its own attention pattern and modality preference.
The key motivation is that HEAL is designed to correct a global decoding imbalance rather than optimize each head independently. For each token, the final prediction is jointly determined by the aggregation of all attention heads. Consequently, the objective of HEAL is to regulate the \textit{collective} distribution of visual and language information at the token level, making a global equilibrium factor a natural design choice.

Although a few synergy heads may exhibit highly specialized behaviors, applying an identical equilibrium factor does not require every head to respond identically. Instead, it provides a consistent global bias that shifts the overall attention representation toward a better visual-language balance. While the contribution of a small number of individual heads may become slightly suboptimal, this effect is compensated by the remaining heads during multi-head aggregation. Since the decoder prediction depends on the \textit{collective} output of all heads rather than any individual head, the overall generation quality is preserved, and hallucinations are effectively reduced.

Our empirical results support this design. As shown in the hyperparameter analysis (Figure \ref{fig:ablation}(b), Tables \ref{tab:equilibrium_factor} and \ref{tab:internvl_comparison}), a single equilibrium factor consistently improves hallucination-related metrics across different model families, including LLaVA, Qwen, and Intern-VL. These results suggest that although synergy heads are heterogeneous at the individual level, their \textit{collective} imbalance can be effectively corrected through a shared global calibration, which is sufficient to achieve consistent performance improvements across diverse MLLMs.

\textbf{(3) The method updates head types every 10 steps based on temporal locality. However, complex queries often shift abruptly between visual description and linguistic reasoning. Does this fixed interval delay adaptation? Could it cause localized hallucinations during these sharp transitions?}

The fixed update interval inevitably introduces a certain delay in tracking the dynamic changes of attention head attributes. This is an intentional design choice that balances adaptation accuracy and inference efficiency. Updating the head taxonomy at every decoding step would provide the most up-to-date estimation, but it would also incur a substantially higher computational cost, making the method much less practical for inference-time deployment.
Importantly, our empirical observations suggest that this delay has only a limited impact on the final decoding behavior. As discussed above, although the types of synergy heads evolve dynamically during token generation, the visual heads remain highly sparse and remarkably stable, while most transitions occur between language-oriented and synergy heads. Similar observations have also been reported in recent studies on attention head dynamics in MLLMs \citep{sparsemm,vhr}. Consequently, a moderate update interval is sufficient to capture the dominant head organization throughout decoding.

Furthermore, our hyperparameter analysis (Figure \ref{fig:ablation}(a) and Table \ref{tab:update_interval}) shows that update intervals between 5 and 15 decoding steps consistently achieve very similar performance, indicating that HEAL is not particularly sensitive to moderate delays in taxonomy updates. We therefore use 10 decoding steps as the default setting, which provides a favorable balance between computational overhead and hallucination mitigation.
Regarding the concern about rapidly switching reasoning patterns (e.g., alternating between visual grounding and language reasoning), we acknowledge that existing benchmarks do not explicitly isolate this phenomenon, making it difficult to quantify its impact directly. Nevertheless, we further evaluate HEAL on more challenging benchmarks, like MMHal-Bench \citep{mmhal_bench}, involving complex multimodal reasoning and localization, where frequent transitions are more likely to occur. The consistent improvements in Table \ref{tab:recent_benchmarks} suggest that the delayed updates introduce only limited degradation in practice.

\textbf{(4) About the two redundancy definitions in the paper.}

In Section \ref{causal_intervention}, redundancy is defined from the perspective of causal contribution. Specifically, after replacing one head output with its counterfactual counterpart, we measure the change in the final multi-head attention representation. If a head has negligible influence on the final representation, it is considered a \textit{causally redundant head}. This filtering step is necessary because some heads may have very limited contribution after the output projection layer, and retaining these heads can introduce noise into the subsequent head taxonomy.

In contrast, Section \ref{did} focuses on information composition within each remaining head. Here, an \textit{information redundant head} refers to a head whose information content does not change significantly after introducing visual or language tokens. In other words, these heads contain negligible modality-specific information and therefore cannot provide meaningful visual-language decomposition.

Both filtering stages are necessary. As shown in Table \ref{tab:replacement_robust}, removing the causal redundancy filtering affects the subsequent head taxonomy and final performance. Meanwhile, without the information redundancy filtering, the modality decomposition becomes unreliable because heads with almost zero information variation cannot provide meaningful visual-language information ratios.

\textbf{(5) The difference between the MoE and dense models in the design of the equilibrium factor. For the MoE model with different activated experts in use, how to compute the value of the equilibrium factor $\alpha$?}

HEAL operates entirely at the Multi-Head Attention (MHA) level rather than the FFN/MLP level. Specifically, our causal intervention, Difference-in-Differences analysis, head taxonomy, and dynamic information calibration are all performed on the attention outputs and value vectors (see Figure \ref{fig:heal} and Theorem \ref{theorem_1}). In contrast, sparse MoE architectures replace the feed-forward (MLP/FFN) sublayer with routed experts, while the attention module remains unchanged in mainstream MoE MLLMs. Therefore, the computational procedure of HEAL is identical for dense and MoE models, and is independent of which experts are activated during inference.

Regarding the equilibrium factor, $\alpha$ is a model-level hyperparameter rather than a quantity computed from the activated experts. Its role is to control the target balance between visual and language information during calibration. Once $\alpha$ is determined, the calibration factors are computed solely from the visual/language information proportions estimated for each attention head, which are independent of the downstream MoE routing. Consequently, the computation of both the equilibrium factor and the calibration factors is the same for dense and MoE models. In practice, $\alpha$ is selected using the same procedure as in dense models, since models within the same family (e.g., Qwen-VL dense and sparse variants) exhibit similar multimodal information characteristics.

\textbf{(6) Does HEAL employ the counterfactual mechanism for all the attention layers?}

HEAL performs the counterfactual analysis for all attention layers. The reason is that our central finding is global rather than layer-specific: hallucinations are causally associated with the disequilibrium between visual and language information within synergy heads distributed throughout the entire network, rather than being dominated by a few specific layers. Therefore, identifying synergy heads requires performing the proposed DiD analysis across all attention layers. Based on the identified head taxonomy, HEAL dynamically calibrates only the synergy heads, which account for a small portion of all attention heads, while the remaining heads are left unchanged.

Although the head taxonomy is obtained from all attention layers, the computational overhead during inference is greatly reduced by the optimizations described in Section \ref{efficient_imple}. First, head attributes are updated periodically rather than at every decoding step, leveraging the temporal locality of head behaviors. Second, the counterfactual analysis is implemented in a batched and parallelized manner. During intermediate decoding steps without taxonomy updates, HEAL only computes the calibration factors for the identified synergy heads instead of repeating the full analysis.

\textbf{(7) Geometric interpretation of HEAL.}

It is important to distinguish the geometric representation space from the information decomposition space in HEAL. In the representation space, the output of a multimodal attention head can be locally viewed as a combination of visual and language components together with a residual component,
\[
h = r + v + l,
\]
where $v$ and $l$ denote the visual and language representation components, and $r$ collects residual components that are not explicitly attributed to either modality.

In contrast, the quantities $I_{\mathrm{vis}}$, $I_{\mathrm{lang}}$, and $I_{\mathrm{syn}}$ computed in Section \ref{did} characterize the information structure induced by counterfactual interventions. In particular,
$
I_{\mathrm{syn}}
=
I_{\mathrm{total}}
-
I_{\mathrm{vis}}
-
I_{\mathrm{lang}}
$
represents the interaction between visual and language information that cannot be accounted for by their individual contributions. Thus, $I_{\mathrm{syn}}$ should not be interpreted as an additional Euclidean vector added independently to $v$ and $l$. Rather, it quantifies a cross-modal interaction in the information space. This distinction explains the geometric illustration in Figure \ref{fig:heal}. The figure depicts the representation space effect of calibration rather than a vector decomposition of all information atoms.

\end{document}